\documentclass[11pt]{article}

\usepackage[preprint]{acl}

\usepackage{times}
\usepackage{latexsym}
\usepackage{enumitem}
\usepackage[T1]{fontenc}

\usepackage[utf8]{inputenc}

\usepackage{microtype}
\definecolor{rawgray}{RGB}{242,242,242}
\definecolor{diagblue}{RGB}{220,235,247}
\definecolor{bestorange}{RGB}{252,232,206}
\usepackage{inconsolata}

\usepackage{graphicx}
\usepackage{times}
\usepackage{epsfig}
\usepackage{graphicx}
\usepackage{amsmath}
\usepackage{amssymb}
\usepackage{amsthm}
\usepackage{booktabs}
\usepackage{xcolor}
\usepackage{tikz}
\usepackage{array}
\usetikzlibrary{positioning,arrows.meta}
\usepackage{tikz}
\usepackage{graphicx}
\usepackage{amsmath,amssymb}
\usepackage{algorithm}
\usepackage{algpseudocode}
\usepackage{graphicx}
\usepackage{amsmath,amssymb}
\usepackage{algorithm}
\usepackage{algpseudocode}
\usepackage[most]{tcolorbox}
\usepackage{listings}
\usepackage{xcolor}

\algrenewcommand\algorithmicrequire{\textbf{Require:}}
\algrenewcommand\algorithmicensure{\textbf{Output:}}
\algrenewcommand\algorithmicrequire{\textbf{Require:}}
\algrenewcommand\algorithmicensure{\textbf{Output:}}
\usepackage{amsmath,amssymb,mathtools}

\theoremstyle{plain}
\newtheorem{assumption}{Assumption}
\newtheorem{proposition}{Proposition}

\newtheorem{corollary}{Corollary}
\newtheorem*{assumption*}{Assumption}
\newtheorem*{proposition*}{Proposition}
\newtheorem*{theorem*}{Theorem}
\newtheorem*{corollary*}{Corollary}
\theoremstyle{remark}
\newtheorem{remark}{Remark}
\newtheorem*{remark*}{Remark}
\newcommand{\cmark}{\checkmark}
\newcommand{\xmark}{$\times$}

\title{Seeking Universal Shot Language Understanding Solutions}

\author{Haoxin Liu\textsuperscript{\textdagger}\thanks{Correspondence to: Haoxin Liu <hliu763@gatech.edu> and B. Aditya Prakash <badityap@cc.gatech.edu>}, Harshavardhan Kamarthi\textsuperscript{\textdagger}, Zhiyuan Zhao\textsuperscript{\textdagger}, Hongjie Chen\textsuperscript{\S},  B. Aditya Prakash\textsuperscript{\textdagger}\footnotemark[1] \\ \textsuperscript{\textdagger}Georgia Institute of Technology \quad \textsuperscript{\S}Dolby Laboratories\\
}

\begin{document}

\maketitle

\begin{abstract}
Shot language understanding (SLU) is crucial for cinematic analysis but remains challenging due to its diverse cinematographic dimensions and subjective expert judgment. While vision-language models (VLMs) have shown strong ability in general visual understanding, recent studies reveal judgment discrepancies between VLMs and film experts on SLU tasks. To address this gap, we introduce \textbf{\textsc{SLU-SUITE}}, a comprehensive training and evaluation suite containing 490K human-annotated QA pairs across 33 tasks spanning six film-grounded dimensions. Using \textsc{SLU-SUITE}, we originally observe two insights into VLM-based SLU from: the model side, which diagnoses key bottlenecks of modules; the data side, which quantifies cross-dimensional influences among tasks.
These findings motivate our universal SLU solutions from two complementary paradigms: \textbf{\textsc{UniShot}}, a balanced one-for-all generalist trained via dynamic-balanced data mixing, and \textbf{\textsc{AgentShots}}, a prompt-routed expert cluster that maximizes peak dimension performance. Extensive experiments show that our models outperform task-specific ensembles on in-domain tasks and surpass leading commercial VLMs by 22\% on out-of-domain tasks.
\end{abstract}
\vspace{-3mm}
\section{Introduction}
\label{sec:intro}
\vspace{-1mm}
\emph{Shot language} is essential for high-quality filmmaking by conveying narrative meaning, emotion, and aesthetic style~\cite{bordwell2008film,shotbench,wu2025moviebench}. Shot language understanding (SLU) is therefore crucial yet quite complicated as: (1) Broadly, SLU spans multiple cinematography aspects, including composition, shot scale, camera motion, lighting, color, and inter-shot relationships. (2) Moreover, even within a single aspect, different experts often adopt different taxonomies (e.g., different composition rules or different levels of granularity for camera motion).

While recent advancements in vision-language models (VLMs) have shown strong ability on general visual understanding, emerging studies~\cite{shotbench,camerabench,cinetechbench,chatterjeestable,wu2025refineshot} reveal notable judgment discrepancies between VLMs and film experts on SLU tasks. These findings highlight the gap and the need for \emph{universal SLU models} that align with expert knowledge, support diverse SLU tasks, and generalize to unseen scenarios. However, one major obstacle to advancing SLU is the lack of suitable data. As in Tab.~\ref{tab:dataset_compare}, existing human-annotated SLU datasets cover a narrow subset of cinematic skills and remain limited in scale~\cite{shotbench,camerabench,cinetechbench,moviecuts}, while VLM-labeled datasets~\cite{wu2025moviebench,ju2024miradata} expand coverage but suffer from quality issues, particularly on misalignment with human expert judgments.

To address these limitations, we introduce \textbf{\textsc{SLU-SUITE}}, a unified training and evaluation suite for general SLU. \textsc{SLU-SUITE} organizes data into six film-grounded dimensions: \emph{composition}, \emph{coverage}, \emph{viewpoint}, \emph{motion}, \emph{lighting}, and \emph{cuts}~\cite{bordwell2008film,brown2016cinematography,chatterjeestable}, which covers \textbf{33 tasks} with \textbf{490K} human-labeled QA pairs, and supports both controlled in-domain (ID) and out-of-domain (OOD) evaluation. Building on \textsc{SLU-SUITE}, we conduct a comprehensive empirical study that identifies the bottlenecks of current VLM-based SLU from two complementary perspectives: the \emph{model side}, which diagnoses key bottlenecks of existing VLMs, and the \emph{data side}, which quantifies cross-dimensional influences among SLU tasks. Gained insights motivate our highly efficient and effective VLM parameter-updating strategy and lead to universal SLU solutions from two complementary paradigms: \textbf{\textsc{UniShot}}, designed as a balanced generalist across cinematic dimensions, and \textbf{\textsc{AgentShots}}, designed as a collection of specialized experts for peak performance on the target dimension.

In summary, our contributions lie in four aspects:

\noindent \textbf{New Datasets and Testbed.} We introduce \textbf{\textsc{SLU-SUITE}}, the first comprehensive and large-scale human-labeled benchmark for general SLU. It encompasses 490K QA pairs across 33 tasks to cover 6 film-grounded dimensions, natively supporting both rigorous ID and OOD evaluations.

\noindent \textbf{Deep Insights.} Through empirical explorations, we suggest that the primary VLM bottleneck for SLU is \emph{semantic alignment} rather than raw visual perception. This makes LM+Connector updating more effective (better performance) and efficient (15\% fewer trainable parameters), compared with all modules updating. Furthermore, we quantify and identify that cross-dimensional transfer is broadly useful but highly \emph{asymmetric and uneven}, thus decoupling universal SLU into two distinct objectives: a balanced generalist and peak specialists.

\noindent \textbf{Data-Centric Solutions.} By these insights, we develop two universal SLU models built upon Qwen3-VL-8B using \textsc{SLU-SUITE}, powered by dedicated data-mixing strategies. \textbf{\textsc{UniShot}} is a single balanced generalist trained via \emph{dynamic-balanced data mixing}, and \textbf{\textsc{AgentShots}} is a prompt-routed expert cluster maximizing peak dimensional performance via \emph{target-aware data mixing}.

\noindent\textbf{Comprehensive Evaluation with Theoretical Analysis.} Extensive evaluations show that a single \textsc{UniShot} model surpasses an ensemble of 12 task-specific SFT models on 9/12 ID tasks, while \textsc{AgentShots} exhibits exceptional zero-shot generalization, outperforming leading commercial VLMs (Gemini-3.0-Pro) by 22\% in average relative accuracy on OOD tasks. Ablation studies further validate the effectiveness of our data-centric strategies. We further theoretically prove that large-scale, multi-source data heterogeneity, as the core characteristic of \textsc{SLU-SUITE}, is essential for training general SLU models.

We provide more related works and limitations in App.~\ref{app:related_work} and ~\ref{app:limitations}. We commit to releasing datasets and models upon paper acceptance.

\begin{table}[h]
\centering
\scriptsize
\setlength{\tabcolsep}{3pt}
\renewcommand{\arraystretch}{1.12}
\resizebox{\linewidth}{!}{%
\begin{tabular}{c c c c}
\toprule
\textbf{Human-Labeled Dataset} & \textbf{Scale} & \textbf{Training Supp.} & \textbf{OOD Eval.} \\
\midrule
ShotBench~\cite{shotbench} &
\shortstack[c]{8 tasks\\ $\sim$70K QA} & \cmark & \xmark \\
CameraBench~\cite{camerabench} &
\shortstack[c]{5 tasks\\ $\sim$150K labels} & \cmark & \xmark \\
CineTechBench~\cite{cinetechbench} &
\shortstack[c]{7 tasks\\ $\sim$0.7K QA} & \xmark & N/A \\
MovieCuts~\cite{moviecuts} &
\shortstack[c]{1 task\\ $\sim$110K clips} & \cmark & \xmark \\
\midrule
\textbf{SLU-SUITE (Ours)} &
\shortstack[c]{\textbf{33 tasks}\\ \textbf{$\sim$490K QA}} & \cmark & \cmark \\
\bottomrule
\end{tabular}%
\vspace{-3mm}
}

\caption{\textbf{Comparison with representative human-labeled SLU datasets.}
\textbf{Scale} reports the number of tasks and the approximate data size.
\textbf{Training Supp.} indicates whether official splits support fine-tuning/training.
\textbf{OOD Eval.} indicates whether the benchmark supports controlled out-of-domain evaluation by design (e.g., unseen tasks), compared with the common in-domain (ID) evaluation. \textbf{Our SLU-SUITE} covers a broader set of tasks, provides large-scale human-labeled data, and explicitly supports both ID and OOD evaluations to advabce general SLU.}
\label{tab:dataset_compare}
\vspace{-6mm}
\end{table}
\begin{table*}[h]
\centering
\footnotesize
\setlength{\tabcolsep}{4pt}
\renewcommand{\arraystretch}{1.12}
\resizebox{\textwidth}{!}{%
\begin{tabular}{@{}c >{\centering\arraybackslash}p{15.2cm} c c@{}}
\toprule
\textbf{High-Level Dimension} & \textbf{Detailed Tasks} & \textbf{Num. Sources} & \textbf{Num. Samples} \\
\midrule
Lighting &
\textbf{4} tasks: \textit{SourceCondition}; \textit{Style}; \textit{Attribute}; \textit{ColorPalette} &
2 & 925 \\
Composition &
\textbf{3} tasks: \textit{CompositionRule}; \textit{CompositionPattern}; \textit{VisualWeightPlacement} &
3 & 4{,}841 \\
Viewpoint &
\textbf{6} tasks: \textit{AngleBasic}; \textit{AngleExtended}; \textit{AngleFinegrained}; \textit{AngleCartoon}; \textit{Height}; \textit{HeightCartoon} &
4 & 37{,}742 \\
Coverage &
\textbf{9} tasks: \textit{ScaleBasic}; \textit{ScaleClassic}; \textit{ScaleExtended}; \textit{ScaleFinegrained}; \textit{ScaleCartoon}; \textit{ScaleHistorical}; \textit{Staging}; \textit{StagingScaleMix}; \textit{FocalLength} &
7 & 51{,}693 \\
Motion &
\textbf{10} tasks: \textit{MoveCoarse}; \textit{MoveCompoundA}; \textit{MoveCompoundB}; \textit{MoveCaptioning}; \textit{VQAComplexity}; \textit{VQAMovement}; \textit{VQAShaking}; \textit{VQASpeed}; \textit{VQAPresence}; \textit{VQAMixedType} &
5 & 282{,}297 \\
Cuts &
\textbf{1} task: \textit{InterShotCutType} &
1 & 109{,}827 \\
\midrule
\textbf{Total} &
\textbf{33} tasks &
\textbf{11} sources & \textbf{487{,}325} QA pairs \\
\bottomrule
\end{tabular}%
}
\vspace{-2mm}
\caption{
\textbf{SLU-SUITE summary by high-level dimension.}
Each row groups task variants under one dimension.
The \textbf{Detailed Tasks} lists task variants in that dimension (count in bold).
We define a task variant by a specific option set (label space/taxonomy); thus, semantically similar tasks from different sources are treated as different variants when their option sets differ.
\textbf{Num. Sources} denotes the number of contributing datasets, and \textbf{Num. Samples} denotes the number of labeled QA pairs.
Overall, SLU-SUITE includes 33 task variants from 11 datasets, totaling 487{,}325 human-labeled QA pairs and covering 53{,}569 unique images and 154{,}451 unique videos after deduplication.
}
\label{tab:suite_overview}
\vspace{-4mm}
\end{table*}
\vspace{-1mm}
\section{\textsc{SLU-SUITE}: Training and Evaluation Suite for General SLU}
\label{sec:suite}
\vspace{-1mm}
\subsection{The Datasets}
\vspace{-1mm}
To mitigate the scarcity of data in shot language understanding (SLU), we build \textbf{\textsc{SLU-SUITE}}, a unified suite designed for \emph{general} SLU training and evaluation.
Compared with existing human-labeled cinematography datasets (Tab.~\ref{tab:dataset_compare}), \textsc{SLU-SUITE} covers a broader set of tasks and explicitly supports both in-domain (ID) and out-of-domain (OOD) evaluation. As detailed in Tab.~\ref{tab:suite_overview}, \textsc{SLU-SUITE} spans six high-level dimensions, including \textbf{33 tasks} with  \textbf{$\sim$490K human-labeled samples ($\sim$53.6K image and $\sim$433.7K video QA pairs)}.

\noindent\textbf{{Organizing 33 Tasks into 6 Film-Grounded Dimensions.}}
Shot language understanding data are highly heterogeneous: different sources use different label spaces (e.g., coarse vs.\ fine-grained shot scale), some tasks are multi-label (option combinations), and some tasks are binary or free-form (caption).
Training and evaluating on a flat list of tasks makes it hard to control coverage, prevent taxonomy confusion, and analyze transfer.

We therefore organize \textsc{SLU-SUITE} into \textbf{six high-level dimensions} that follow standard cinematography controls used in film analysis and on-set communication (e.g., framing/composition, scale and lensing, viewpoint, camera movement, lighting/color, and editing/cuts), consistent with structured filmmaking taxonomies~\cite{bordwell2008film,brown2016cinematography,chatterjeestable}. As summarized in Tab.~\ref{tab:suite_overview}, each dimension contains multiple tasks: \underline{\emph{Lighting}} (\textbf{4} tasks) captures illumination and color style, including source/condition, lighting style/attributes, and palette cues (\textit{SourceCondition}, \textit{Style}, \textit{Attribute}, \textit{ColorPalette}). \underline{\emph{Composition}} (\textbf{3} tasks) captures how subjects and visual mass are arranged in the frame, including different layout templates (\textit{Composition*}) and visual placement (\textit{VisualWeightPlacement}). \underline{\emph{Viewpoint}} (\textbf{6} tasks) describes camera placement relative to the subject and scene, including angle taxonomies (\textit{Angle*}) and camera height (\textit{Height*}). \underline{\emph{Coverage}} (\textbf{9} tasks) models what a shot covers and how coverage is constructed, including shot scale taxonomies (\textit{Scale*}), staging-based coverage patterns (\textit{Staging}), mixed staging--scale labels (\textit{StagingScaleMix}), and lens-related cues (\textit{FocalLength}). \underline{\emph{Motion}} (\textbf{10} tasks) covers both movement taxonomies and movement attributes: coarse vs.\ compound movements (\textit{MoveCompound*}), motion description (\textit{MoveCaption*}), and attribute-focused VQA such as speed, shaking, and complexity (\textit{VQA*}). \underline{\emph{Cuts}} (\textbf{1} task) focuses on editing transitions within a shot-level context (\textit{InterShotCutType}).

Note that we consider semantically similar tasks as different detailed tasks when their option sets differ (e.g., \textit{ScaleBasic} vs.\ \textit{ScaleFinegrained}).

\noindent\textbf{Constructing via 11 Sources and 4 Stages.}
We build \textsc{SLU-SUITE} through a multi-source pipeline.
We first identify 11 dataset sources that satisfy two requirements:
(i) annotations are produced by humans (or verified by humans), rather than generated by LLMs;
(ii) the sources jointly cover the six dimensions above, and each dimension is supported by at least two sub-tasks or has sufficient data volume (e.g., $\ge$50K).
For each source, we collect the corresponding media and labels, and remove samples with missing files, broken links, corrupted content, or decoding failures.
We then filter sensitive content using \textit{Gemini-3.0-Flash}.
Finally, we convert all sources into a unified QA format, including both classification-style QA and captioning-style QA.
For classification-style QA, we provide explicit candidate options in the question and allow multi-label answers. See  details in App.~\ref{app:slu_suite_protocol}.
\vspace{-1mm}
\subsection{Evaluation Protocols}
\vspace{-1mm}
\noindent\textbf{Mitigating media-level leakage.}
Due to curating from multiple sources, 
we perform de-duplication across images and videos to ensure that evaluation media do not appear in the training set.
After de-duplication, we obtain \textbf{about 53K unique images and 154K unique video clips}.
\noindent\textbf{ID and OOD evaluation.}
We evaluate generalization in two settings: \emph{ID}, where test tasks are included in training, and \emph{OOD}, where test tasks are excluded from training. 

For \underline{OOD evaluation}, we use \textbf{11} classification-style QA tasks from \emph{CineTechBench}~\cite{cinetechbench} and \emph{CameraBench}~\cite{camerabench}, two existing SLU datasets for test.  For the remaining \textbf{22} tasks, we select \textbf{12} tasks with more than 3K samples and split each of them into \textbf{80\%} training and \textbf{20\%} \underline{ID evaluation}.
We add all samples from the other \textbf{10} tasks to the \underline{training set}. See detailed data partitions in App.~\ref{app:data_splits}.
To further avoid cross-source leakage, we remove any OOD test sample whose media appears in the training set.
For simplicity, we use classification-style QA for both training and testing, and supplement the training set with captioning-style QA. Finally, we use \textbf{410K} QA pairs for training and \textbf{50K} QA pairs for evaluation.
\vspace{-1mm}
\section{Empirical Explorations on SLU-SUITE}
\label{sec:exploration}
\vspace{-1mm}
We investigate two key research questions at the model and data levels on the constructed \textsc{SLU-SUITE} to motivate our solution designs.
\vspace{-1mm}
\subsection{RQ1: The bottleneck of VLMs for SLU?}
\label{sec:rq1_bottleneck}
\vspace{-1mm}
Although general VLMs are trained on massive multimodal corpora, they still perform poorly on SLU tasks, even though films are a common form of media in their training data. This naturally raises a key question: what is the bottleneck of VLMs for SLU? We compare LoRA adaptation on the \textbf{vision encoder}, \textbf{multimodal connector}, \textbf{language model (LM)} and \textbf{all}.
To isolate module choice from cross-dimensional transfer, each SLU dimension is trained and evaluated independently under the same backbone, split, LoRA rank (\textbf{32}), and optimization setup; only the adapted modules differ.
Additional setup details are given in App.~\ref{app:rq1_details}.

\begin{table}[t]
  \centering
  \small
  \setlength{\tabcolsep}{4pt}
  \renewcommand{\arraystretch}{1.08}
  \caption{\textbf{Results of updating different modules of VLM (Qwen3-VL-8B) for SLU.}
  Each row trains and tests on a single SLU dimension, thereby removing cross-dimensional transfer effects.
  \textbf{LM+Connector} achieves the best average performance while using about \(15\%\) fewer trainable parameters than \textbf{All}.
  Vision-only adaptation on viewpoint resulted in model collapse, i.e., incapable of solving the OOD tasks; see App.~\ref{app:rq1_details}}.
  \label{tab:ablation-lora-dim}
  \vspace{-2mm}
  \resizebox{\linewidth}{!}{%
  \begin{tabular}{c|cccccc}
    \hline
                     & \multicolumn{6}{c}{LoRA Target}                                      \\ \hline
Dimension            & Base  & Connector & Vision & LM    & LM+Connector & All            \\ \hline
Lighting             & 0.533 & 0.533     & 0.524  & 0.610 & \textbf{0.661} & 0.616          \\
Composition          & 0.062 & 0.412     & 0.277  & 0.546 & \textbf{0.623} & 0.579          \\
Viewpoint            & 0.500 & 0.725     & --     & 0.750 & 0.780          & \textbf{0.801} \\
Coverage             & 0.457 & 0.609     & 0.627  & 0.619 & \textbf{0.667} & 0.633          \\
Motion               & 0.602 & 0.647     & 0.605  & 0.713 & \textbf{0.724} & 0.721          \\
Cuts                 & 0.230 & 0.340     & 0.325  & 0.330 & \textbf{0.380} & 0.365          \\
\hline
Avg. (all 6)         & 0.397 & 0.544     & --     & 0.595 & \textbf{0.639} & 0.619          \\
Avg. (shared 5)      & 0.377 & 0.508     & 0.471  & 0.563 & \textbf{0.611} & 0.583          \\
\# Best dimensions   & 0     & 0         & 0      & 0     & \textbf{5}     & 1              \\
Trainable Params (M) & 0     & 2.3       & 15.4   & 87.3  & \textbf{89.6}  & 105.0          \\ \hline
\end{tabular}}
\vspace{-5mm}
\end{table}

As shown in Tab.~\ref{tab:ablation-lora-dim} (1) Updating the \textbf{LM} gives much larger gains than updating only the connector or only the vision encoder, and is only slightly worse than updating the whole VLM. (2) Surprisingly, updating \textbf{LM+Connector} even outperforms updating all modules, while using 15\% fewer trainable parameters. In contrast, updating only the vision encoder sometimes causes the model to collapse, i.e., fail to complete the task.

These original insights suggest that advanced VLMs already capture the visual signals needed for SLU well, and updating the vision tower can even hurt performance; the main bottleneck likely lies on the language side, such as aligning visual signals with cinematographic terms and learning the decision boundaries for SLU tasks. We therefore adopt \textbf{LM+Connector} LoRA fine-tuning as the default VLM post-training strategy.
\vspace{-2mm}
\subsection{RQ2: Effects across SLU dimensions?}
\label{sec:rq2_transfer}
Given that \textsc{SLU-SUITE} consists of six different dimensions, a key question for building universal solutions is how these dimensions interact with each other. 
Let \(\mathcal{D}\) denote the six SLU dimensions.
For each source dimension \(i \in \mathcal{D}\), we fine-tune a model using data from \(i\) and evaluate it on every target dimension \(j \in \mathcal{D}\) to quantify the transfer effect.
We denote the resulting accuracy by \(A_{i\rightarrow j}\), the pretrained baseline by \(A_{\mathrm{base}\rightarrow j}\), and the target-only performance by \(A_{j\rightarrow j}\). Notably, since each dimension is trained with all its training data, the matrix reflects \emph{practical transfer utility in the natural SLU setting}, where some dimensions are low-resource---Lighting has fewer than 1K samples although aggregated from four existing datasets. 
As shown in Fig.~\ref{fig:transfer_matrix} and Tab.~\ref{tab:transfer-summary}, we have two key insights:

\noindent\textbf{Insight 1: Cross-dimensional supervision is broadly reusable.}
Most off-diagonal source--target pairs improve over the base model, indicating that different SLU dimensions share substantial reusable cues.  For low-resource targets such as \textsc{Lighting}, several auxiliary dimensions even outperform target-only training.
\emph{SLU should therefore not be treated as a collection of isolated dims.}

\noindent\textbf{Insight 2: Transfer utility is strongly target-dependent and asymmetric.}
Auxiliary data are not equally useful for all targets.
For example, \textsc{Lighting}, \textsc{Composition}, \textsc{Viewpoint}, and \textsc{Motion} benefit from multiple external sources, whereas \textsc{Coverage} receives much weaker transfer overall. 
The relation is also asymmetric: \textsc{Coverage} shows strong transfer-out helpfulness but weak transfer-in receptiveness.  This suggests that the best recipe for one target is not necessarily the best for balanced performance across all dims.

In short, these suggest that \emph{(1) balanced performance across dims and peak performance for target dim might be distinct objectives, and (2) carefully designed data mixing strategies are necessary.}
\vspace{-8mm}
\noindent\begin{center}
\fbox{
\parbox{\linewidth}{
\textbf{Key Implications for SLU Model Building.}

RQ1 shows that the main bottleneck of VLMs for SLU is the \textbf{language side}, leading us to adopt \textbf{updating LM+Connector} as the default VLM post-training strategy, which is more efficient and effective than updating all modules.

RQ2 shows that cross-dimensional supervision is \textbf{broadly useful but highly uneven and asymmetric}.
This motivates us to decouple universal SLU with \textbf{two distinct objectives}:
\textbf{(1)} a \emph{generalist} that maintains balanced performance across dimensions, and
\textbf{(2)} a set of \emph{specialists} that maximize performance for the corresponding dimensions.
It also motivates our focus toward \textbf{data-centric solutions}, by designing cross-dimensional data mixing strategies dedicated for different objectives.
}}
\end{center}
\vspace{-2mm}

\begin{figure}[t]
    \centering
    \includegraphics[width=\linewidth]{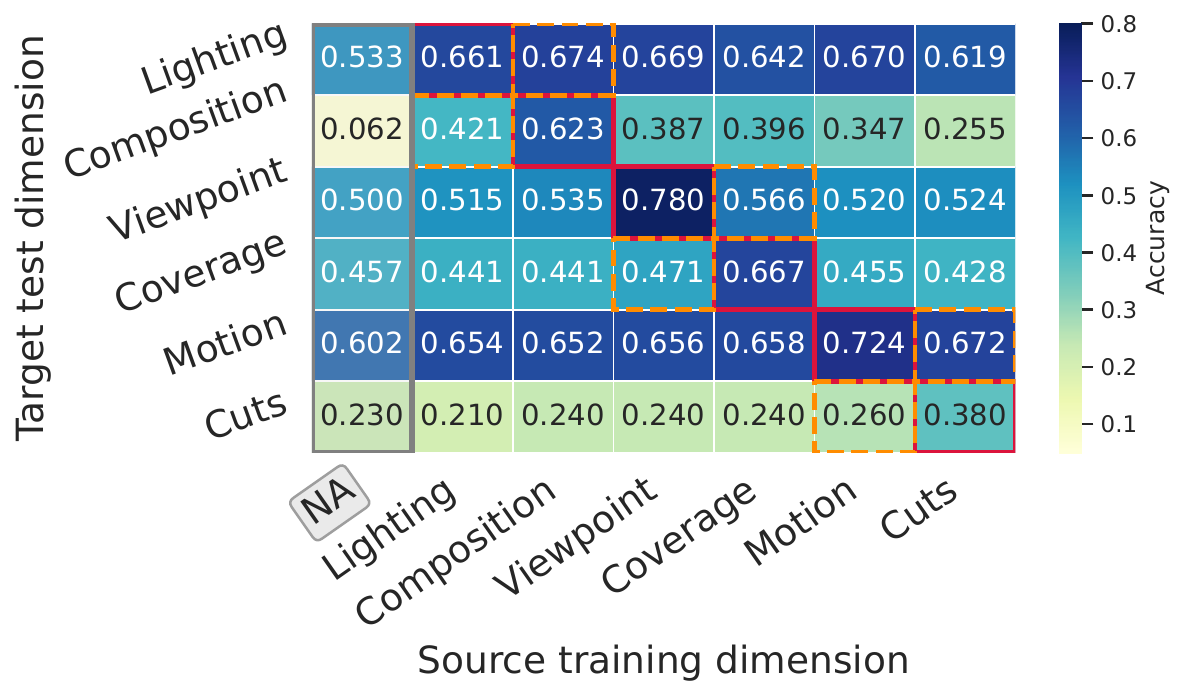}
    \vspace{-4mm}
    \caption{\textbf{Cross-dimensional transfer matrix.}
    Each column denotes the source training dimension and each row denotes the target test dimension.
    The \textit{NA} column corresponds to initial performance of the VLM backbone (Qwen3-VL-8B).
    Transfer is broadly useful, but clearly uneven across targets.}
    \label{fig:transfer_matrix}
    \vspace{-5mm}
\end{figure}

\section{Proposed Universal SLU Models}
\label{sec:solutions}
\vspace{-1mm}
Based on these insights, we pursue two solutions for general SLU: \textbf{\textsc{UniShot}} aims to achieve \emph{balanced performance across all six dimensions within a single model}, while \textbf{\textsc{AgentShots}} aims to deliver \emph{the best performance on a queried SLU task through prompt-routed specialists}.
\vspace{-1mm}
\subsection{\textsc{UniShot}: Single Generalist via Dynamic-Balanced Data Mixing}
\label{sec:unishot}

\noindent\textbf{Goal.}
\textsc{UniShot} learns a \emph{single balanced model} for all six SLU dimensions.

In Sec.~\ref{sec:exploration}, RQ2 shows that auxiliary supervision is broadly useful, but its value is uneven across targets.
\textsc{UniShot} therefore reweights training at the \emph{dimension level}: at each step, it samples a source dimension according to a learned distribution and then draws training instances from that dimension.
Implementation details, validation construction, and full pseudo-code are provided in App.~\ref{app:unishot_details}. We first derive a transfer prior from Fig.~\ref{fig:transfer_matrix}.
For each source--target pair, we define the normalized positive transfer gain as
\[
\begin{aligned}
\vspace{-2mm}
T_{i\rightarrow j}
&=
\left[
\frac{A_{i\rightarrow j}-A_{\mathrm{base}\rightarrow j}}
{A_{j\rightarrow j}-A_{\mathrm{base}\rightarrow j}+\epsilon}
\right]_+,
\\
\tilde{T}_{i\rightarrow j}
&=
\frac{T_{i\rightarrow j}}
{\sum_{k\in\mathcal{D}} T_{k\rightarrow j}+\epsilon},
\end{aligned}
\vspace{-2mm}
\]
where \([x]_+=\max(x,0)\) and \(\epsilon\) is a small constant.
Here, \(T_{i\rightarrow j}\) measures how much source \(i\) recovers the improvement margin of target \(j\) over the pretrained baseline, while \(\tilde{T}_{i\rightarrow j}\) normalizes this contribution across candidate sources.

Starting from a uniform distribution \(p_i^{(0)}=\frac{1}{|\mathcal{D}|}\), \textsc{UniShot} periodically evaluates the current model on a small held-out validation split and uses the resulting dimension-level score \(V_j^{(t)}\) to estimate how far target \(j\) still lags behind its target-only anchor \(A_{j\rightarrow j}\):
$ g_j^{(t)}
=
\left[
1-\frac{V_j^{(t)}}{A_{j\rightarrow j}+\epsilon}
\right]_+.$ A larger \(g_j^{(t)}\) indicates that dimension \(j\) remains under-optimized. We then compute the utility of each source dimension as $u_i^{(t)}
=
\sum_{j\in\mathcal{D}} \tilde{T}_{i\rightarrow j}\, g_j^{(t)},$ and update the sampling distribution by
\[
\vspace{-2mm}
\begin{aligned}
\hat{p}_i^{(t+1)}
&=
(1-\lambda)\frac{1}{|\mathcal{D}|}
+\lambda \frac{u_i^{(t)}}{\sum_{k\in\mathcal{D}} u_k^{(t)}+\epsilon}, \\
p_i^{(t+1)}
&=
(1-\beta)p_i^{(t)}+\beta \hat{p}_i^{(t+1)}.
\end{aligned}
\]
where \(\lambda\) balances uniform exploration and adaptive reweighting, and \(\beta\) controls update smoothness. \textsc{UniShot} therefore couples the current training state with the empirical transfer prior, repeatedly shifting probability mass toward sources that are most useful for the under-performed dimensions.
\vspace{-1mm}
\subsection{\textsc{AgentShots}: Peak Specialists via Target-Aware Data Mixing}
\label{sec:agentshots}
\vspace{-1mm}
\noindent\textbf{Goal.}
\textsc{AgentShots} maximizes \emph{peak performance} for queried SLU tasks via routed specialists.

In Sec.~\ref{sec:exploration}, RQ2 suggests that each target should use a different auxiliary recipe based on its own transfer profile to achieve peak performance. For each target dimension \(d\), we first identify the set of beneficial auxiliary sources: $\mathcal{S}_d
=
\{\, i \in \mathcal{D}\setminus\{d\}\;|\; A_{i\rightarrow d} > A_{\mathrm{base}\rightarrow d} \,\}.$
For each selected source \(i \in \mathcal{S}_d\), we define its transfer gain over the baseline as $s_{i|d} = A_{i\rightarrow d} - A_{\mathrm{base}\rightarrow d},
\qquad
q_{i|d}
=
\frac{s_{i|d}}{\sum_{k\in\mathcal{S}_d} s_{k|d}+\epsilon}.
$
This excludes harmful sources by construction and assigns more auxiliary weight to sources that are empirically more helpful. We then mix target and auxiliary data with
$p_d^{(d)} = \alpha,\qquad
p_i^{(d)} = (1-\alpha)\, q_{i|d}, \quad i\in\mathcal{S}_d,$
and \(p_i^{(d)}=0\) otherwise.
If \(\mathcal{S}_d=\emptyset\), we fall back to target-only training.
In practice, we set \(\alpha=0.7\), which keeps the target dominant while still exploiting useful cross-dimensional supervision.

We finally obtain six specialists---six LoRA adapters corresponding one-to-one to the six SLU dims---sharing the same frozen VLM backbone.

\textbf{Deployment Efficiency.}
\textsc{AgentShots} does not train a separate router.
Instead, the same frozen VLM backbone plays the router role: reads the task prompt together with short descriptions of the six LoRA experts, i.e., target dimensions' task lists in \textsc{SLU-SUITE}, and loads only the top-1 matched expert for prediction.
This preserves the same \emph{active} parameter count as \textsc{UniShot}; see App.~\ref{app:agentshots-routing}.

\textbf{Training Details.}
We adopt the same \textbf{LM+Connector} fine-tuning strategy as in Sec.~\ref{sec:rq1_bottleneck}, and perform supervised fine-tuning with the next-token prediction loss.
We train on the full dataset using 2$\times$A100 80GB GPUs, with a batch size of 8, a 32 LoRA rank, and 2 training epochs.
By default, we use Qwen3-VL-8B as the frozen VLM backbone, with about 89.6M (\textsc{UniShot}) and 537.6M (\textsc{AgentShots}) trainable LoRA parameters.


\begin{table*}[t]
  \centering
  \footnotesize
  \setlength{\tabcolsep}{4pt}
  \renewcommand{\arraystretch}{1.08}
  \caption{\textbf{ID results on 12 seen task variants.}
We compare general-purpose VLMs, an existing SLU model, our two universal solutions, and 12 task-specific Qwen3-VL-8B counterparts.  \textbf{Avg. Acc} denotes the unweighted mean across tasks. \textbf{Count \#1} reports the number of tasks on which a method ranks first.
\textbf{\textsc{UniShot}} achieves the best average accuracy (\textbf{0.759}), while \textbf{\textsc{AgentShots}} remains nearly identical (0.758).
Both of our universal models outperform the ensemble of 12 task-specific counterparts (0.740), suggesting positive transfer across seen SLU tasks.}
  \vspace{-2mm}
  \label{tab:iid-main}
  \resizebox{\textwidth}{!}{%
  \begin{tabular}{lcccccccc}
    \toprule
    & \multicolumn{4}{c}{General-purpose VLMs} & Existing SLU & \multicolumn{2}{c}{Ours} & Task-specific Ensemble \\
    \cmidrule(lr){2-5}\cmidrule(lr){6-6}\cmidrule(lr){7-8}\cmidrule(lr){9-9}
    12 In-Domain Tasks
    & Qwen3-VL-8B & Gemini-2.5-Flash & Gemini-3.0-Flash & Gemini-3.0-Pro
    & ShotVL-7B
    & \textbf{\textsc{UniShot}} & \textbf{\textsc{AgentShots}}
    & 12 task-specific SFT Qwen3-VL-8B \\
    \midrule
    CompositionRule  & 0.020 & 0.005 & 0.040 & 0.040 & 0.775 & \textbf{0.790} & 0.785 & 0.784 \\
    \addlinespace[1pt]
    ScaleExtended    & 0.145 & 0.165 & 0.125 & 0.170 & 0.115 & 0.290          & 0.255          & \textbf{0.320} \\
    ScaleHistorical  & 0.670 & 0.770 & 0.850 & 0.825 & 0.740 & \textbf{0.900} & 0.895          & 0.855 \\
    ScaleCartoon     & 0.715 & 0.675 & 0.650 & 0.660 & 0.460 & \textbf{0.940} & 0.925          & 0.875 \\
    ScaleBasic       & 0.650 & 0.465 & 0.595 & 0.555 & 0.425 & \textbf{0.875} & \textbf{0.875} & 0.865 \\
    Staging          & 0.430 & 0.165 & 0.280 & 0.295 & 0.290 & \textbf{0.785} & 0.760          & 0.685 \\
    \addlinespace[1pt]
    AngleBasic       & 0.330 & 0.420 & 0.440 & 0.485 & 0.245 & 0.940          & 0.955          & \textbf{0.960} \\
    AngleCartoon     & 0.350 & 0.515 & 0.450 & 0.545 & 0.260 & 0.690          & 0.705          & \textbf{0.710} \\
    Height           & 0.630 & 0.690 & 0.705 & 0.670 & 0.650 & \textbf{0.905} & 0.900          & 0.875 \\
    HeightCartoon    & 0.540 & 0.385 & 0.555 & 0.580 & 0.600 & 0.740          & \textbf{0.775} & 0.720 \\
    \addlinespace[1pt]
    MoveCoarse       & 0.745 & 0.650 & 0.710 & 0.585 & 0.600 & \textbf{0.850} & 0.840          & 0.840 \\
    \addlinespace[1pt]
    InterShotCutType & 0.230 & 0.215 & 0.295 & 0.270 & 0.210 & 0.400          & \textbf{0.425} & 0.390 \\
    \midrule
    Avg.\ Acc.\ (\(\uparrow\)) & 0.455 & 0.427 & 0.475 & 0.473 & 0.448 & \textbf{0.759} & 0.758 & 0.740 \\
    Count \#1 (\(\uparrow\))   & 0/12  & 0/12  & 0/12  & 0/12  & 0/12  & \textbf{7/12}  & 3/12  & 3/12 \\
    \bottomrule
  \end{tabular}%
  }
  \vspace{-2mm}
\end{table*}

\begin{table*}[t]
  \centering
  \footnotesize
  \setlength{\tabcolsep}{4pt}
  \renewcommand{\arraystretch}{1.08}
  \caption{\textbf{OOD results on 11 unseen task variants.}
OOD tasks remain within the same six high-level shot-language dimensions, but introduce unseen tasks, either new option sets (\textit{ScaleClassic}) or new task types (\textit{FocalLength}).  \textbf{Avg. Acc.} denotes the unweighted mean across tasks. \textbf{Count \#1} reports the number of tasks on which a method ranks first.
\textbf{\textsc{AgentShots}} delivers the strongest OOD generalization, achieving the best average accuracy (\textbf{0.666}) and ranking first on \textbf{7/11} task variants.
\textbf{\textsc{UniShot}} remains competitive at 0.617 and still exceeds all other solutions.}
  \vspace{-2mm}
  \label{tab:ood-main}
  \resizebox{0.9\textwidth}{!}{%
  \begin{tabular}{lccccccc}
    \toprule
    & \multicolumn{4}{c}{General-purpose VLMs} & Existing SLU & \multicolumn{2}{c}{Ours} \\
    \cmidrule(lr){2-5}\cmidrule(lr){6-6}\cmidrule(lr){7-8}
    11 Out-of-Domain Tasks
    & Qwen3-VL-8B & Gemini-2.5-Flash & Gemini-3.0-Flash & Gemini-3.0-Pro
    & ShotVL-7B
    & \textbf{\textsc{UniShot}} & \textbf{\textsc{AgentShots}} \\
    \midrule
    CompositionPattern & 0.083 & 0.208 & 0.158          & 0.525 & 0.133 & 0.433          & \textbf{0.558} \\
    \addlinespace[1pt]
    ScaleClassic       & 0.371 & 0.486 & \textbf{0.600} & 0.436 & 0.507 & 0.414          & 0.593 \\
    FocalLength        & 0.217 & 0.333 & \textbf{0.467} & 0.433 & 0.300 & 0.367          & 0.400 \\
    \addlinespace[1pt]
    AngleExtended      & 0.650 & 0.575 & \textbf{0.825} & 0.792 & 0.608 & 0.608          & 0.650 \\
    \addlinespace[1pt]
    MoveCompoundB       & 0.815 & 0.750 & 0.810          & 0.585 & 0.805 & \textbf{0.880} & \textbf{0.880} \\
    VQAMovement        & 0.366 & 0.452 & 0.570          & 0.441 & 0.301 & 0.634          & \textbf{0.645} \\
    VQAComplexity      & 0.535 & 0.555 & 0.570          & 0.355 & 0.490 & \textbf{0.800} & \textbf{0.800} \\
    VQAShaking         & 0.640 & 0.615 & 0.565          & 0.520 & 0.615 & 0.720          & \textbf{0.730} \\
    VQASpeed           & 0.510 & 0.680 & 0.605          & 0.645 & \textbf{0.730} & 0.675 & 0.705 \\
    \addlinespace[1pt]
    LightingStyle      & 0.582 & 0.591 & 0.600          & 0.691 & 0.582 & 0.627          & \textbf{0.700} \\
    ColorPalette       & 0.483 & 0.600 & 0.617          & 0.600 & 0.550 & 0.633          & \textbf{0.667} \\
    \midrule
    Avg.\ Acc.\ (\(\uparrow\)) & 0.478 & 0.531 & 0.581 & 0.548 & 0.511 & 0.617 & \textbf{0.666} \\
    Count \#1 (\(\uparrow\))   & 0/11  & 0/11  & 3/11  & 0/11  & 1/11  & 2/11  & \textbf{7/11} \\
    \bottomrule
  \end{tabular}%
  }
  \vspace{-3mm}
\end{table*}

  \vspace{-1mm}
\section{Experiments}
\label{sec:experiments}
  \vspace{-1mm}
\subsection{Main Results}
\label{sec:main-results}
  \vspace{-1mm}

We answer two questions in this section:
\textbf{(1) ID performance:} can our universal SLU solutions match or exceed strong general-purpose and task-specific models on seen SLU tasks?
\textbf{(2) OOD generalization:} can they transfer to unseen SLU task?

\noindent\textbf{Compared methods.} We compare against five existing universal baselines:
(i) \textbf{Qwen3-VL-8B}~\cite{bai2025qwen3} as the open-source backbone baseline;
(ii) \textbf{Gemini-2.5-Flash}, \textbf{Gemini-3.0-Flash}, and \textbf{Gemini-3.0-Pro} as strong commercial VLMs; and
(iii) \textbf{ShotVL-7B}~\cite{shotbench} as an existing SLU model.
For ID evaluation, representing the standard isolated-training paradigm, we further compare with \textbf{12 task-specific SFT Qwen3-VL-8B models}, each trained on a single training task and evaluated on the corresponding ID task.

\noindent\textbf{Evaluation Metrics.} All methods are evaluated under the same classification-style QA protocol.
Predictions are scored by exact match on the target answer string; for multi-label questions, all labels must be correct.
OOD tasks remain within the same six high-level shot-language dimensions, but introduce unseen task variants from different sources, with new taxonomies and/or new question forms.
Unless otherwise noted, all reported numbers are averaged over three independently fine-tuned runs, and use greedy decoding with temp $=0$.

\noindent\textbf{ID results: our universal models outperform 12 task-specific fine-tuned models.}
As shown in Tab.~\ref{tab:iid-main}:
\textbf{\textsc{UniShot}} achieves the best average accuracy (\textbf{0.759}) and ranks first on \textbf{7/12} tasks; \textbf{\textsc{AgentShots}} performs almost the same as \textsc{UniShot} (0.758 vs.\ 0.759).
 Surprisingly, \textsc{UniShot} and \textsc{AgentShots} both outperform the ensembble performance of the \textbf{12 task-specific Qwen3-VL-8B counterparts (0.740)}, suggesting strong and general positive transfer across SLU tasks.

\noindent\textbf{OOD results: our universal models can generalize well.}
As shown in Tab.~\ref{tab:ood-main},
\textbf{\textsc{AgentShots}} achieves the best average accuracy (\textbf{0.666}).
\textbf{\textsc{UniShot}} also remains competitive, reaching an average accuracy of \textbf{0.617}, which is already substantially higher than all other universal baselines on average (\textbf{best baseline: Gemini-3.0-Flash at 0.581}).
These ID and OOD results suggest that (1) \textsc{UniShot}, an 8B VLM backbone with one LoRA adapter (89.6M trainable parameters) already has enough capacity to memorize the fine-tuning data, and (2) the main advantage of \textsc{AgentShots} as peak specialists is stronger OOD generalization, by capturing dimension-specific info.

\noindent\textbf{Other Observations.} We further observe that (1) Qwen3-VL-8B still falls far behind advanced closed-source VLMs on SLU tasks.
(2) The existing SLU model ShotVL-7B achieves performance close to Gemini-2.5-Flash, but still trails our 8B-scale \textsc{UniShot} and \textsc{AgentShots} by a large margin. (3) Gemini-3.0 versions show clear improvement over Gemini-2.5-Flash, indicating that SLU benefits from progress in general visual understanding. However, Gemini-3.0-Pro, which mainly improves reasoning ability, performs worse than the Gemini-3.0-Flash. This finding suggests that the bottleneck of VLMs for SLU cannot be directly solved by stronger reasoning, which further highlights the significance of our work. App.~\ref{app:backbone-transfer} further verifies that our solutions remain effective when replacing Qwen3-VL-8B with Qwen3.5-VL-9B.

\noindent \textbf{Overall Takeaways.} \textsc{UniShot}: provides strong and balanced one-for-all performance within a dense model. \textsc{AgentShots}: further delivers clearly better OOD generalization performance by sparse-activated LoRA experts. Both solutions are effective and highly adaptable for diverse use cases
\begin{figure}[t]
    \centering
    \includegraphics[width=\linewidth]{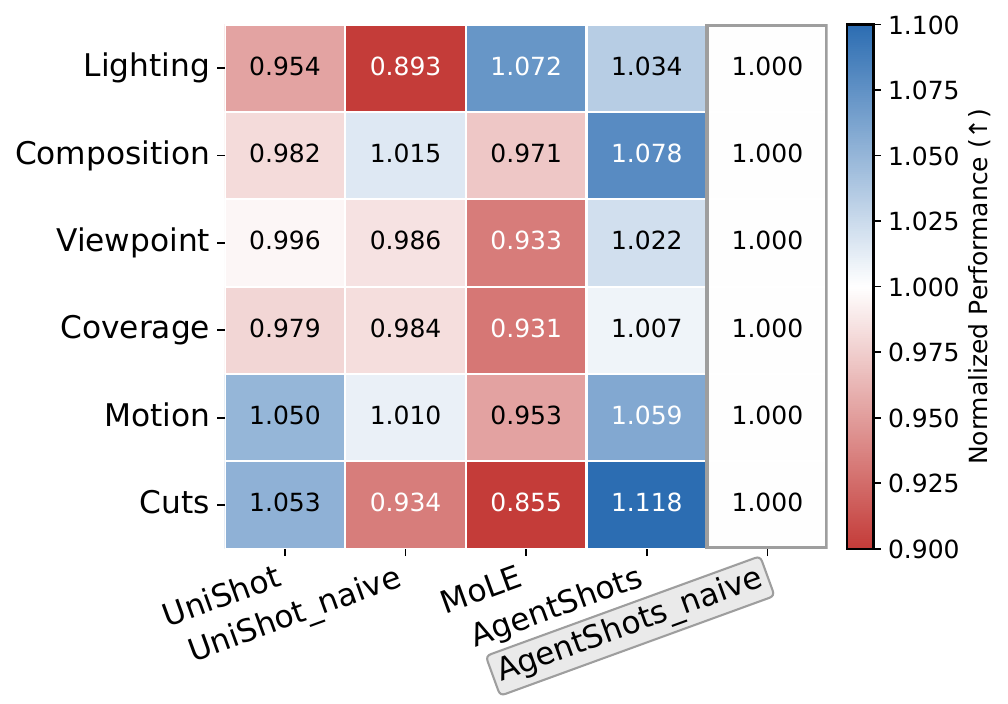}
    \vspace{-4mm}
    \caption{\textbf{Ablation of data mixing strategies.}
    Accuracy is normalized by \textit{AgentShots\_naive}; \(1.0\) means matching specialist trained only on in-dimension data.
    Compared with \textsc{UniShot\_naive}, \textsc{UniShot}  improves the weakest dimensions and average performances, producing a more balanced single-model generalist.
    Compared with \textsc{AgentShots\_naive}, \textsc{AgentShots} improves all six dimensions and achieves the best result on 5 of 6.
    \textit{MoLE}~\cite{wumixture} serves as advanced multi-LoRA baseline, testing the performances of end-to-end data mixing.
    See App.~\ref{app:recipe-ablation} for raw results.}
    \label{fig:strategy-heatmap}
    \vspace{-4mm}
\end{figure}
 \vspace{-5mm}
\subsection{Ablation Studies}
\label{sec:ablation}
 \vspace{-1mm}
Having demonstrated the superiority of \textsc{UniShot} and \textsc{AgentShots}, we next verify the contribution of our data-centric solutions in Sec.~\ref{sec:solutions}.

We keep all other factors (the backbone, LoRA setting, optimizations, etc.) fixed and introduce two ablation variants of our data mixing strategies: \textsc{UniShot\_naive} directly concatenates data from all six dimensions without dynamic balancing. \textit{AgentShots\_naive} trains each expert only on data from its own dimension, without incorporating auxiliary supervision from other dimensions. We also include \textit{MoLE}~\cite{wumixture}, an advanced mixture-of-LoRA training method, with the same expert numbers as \textsc{AgentShots}.

To compare the effectiveness of each method on each dimension, Fig.~\ref{fig:strategy-heatmap} reports accuracy normalized by \textsc{AgentShots\_naive}.
A normalized value of \(1.0\) means that the method is on par with training a specialist using the full data of that dimension only.
We provide the detailed raw results in App.~\ref{app:recipe-ablation}. We have two key observations: \textbf{(1) Dynamic-balanced data mixing improves both average and weakest-dimension performance.}
The average normalized performance rises from 0.970 to 1.000, reaching parity with dimension-specific training.
More importantly, the worst and second-worst dimension scores improve from 0.893/0.934 to 0.954/0.979.
These results show that dynamic-balanced data mixing is important for learning a single model with both strong and
 balanced performance. \textbf{(2) Target-aware data mixing is necessary for strong specialist performance.}
\textsc{AgentShots} improves over \textsc{UniShot\_naive} on every dimension and achieves the best result on 5 of 6 dimensions.
By contrast, although MoLE performs better on \textsc{Lighting}, it falls behind even \textsc{UniShot\_naive} on multiple other dimensions.
This shows that our target-aware data mixing strategy is not only effective but also necessary: strong specialist performance cannot be directly achieved by learned data mixing and more trainable parameters.

\vspace{-1mm}
\section{Further Analysis: How SLU benefits from data heterogeneity and scale?}
\vspace{-1mm}
As discussed in Sec.~\ref{sec:suite}, our constructed SLU-SUITE differs from existing SLU datasets in two key aspects: \emph{data scale}, providing 410K training samples, and \emph{heterogeneity}, aggregating 11 data sources and 22 training tasks with diverse option sets. To understand how these characteristics contribute to general SLU, we conduct further empirical study and theoretical analysis.
\begin{figure}[t]
    \centering
    \includegraphics[width=0.9\linewidth]{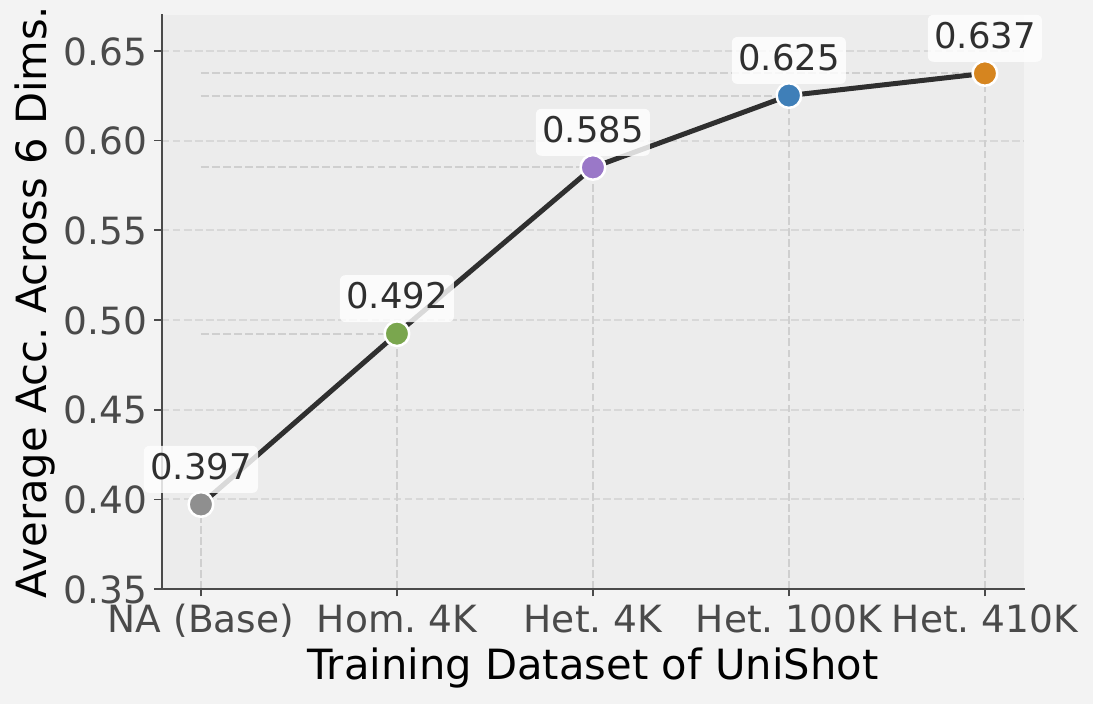}
    \vspace{-2mm}
    \caption{\textbf{Results of dataset variants.} At a matched 4K budget, heterogeneous (multi-source, i.e., randomly sampled from our \textsc{SLU-SUITE}) dataset yields a significant improvement over the homogeneous dataset (mainly ShotBench). Further scaling the heterogeneous pool up to 410K provides progressive performance improvement. See detailed results in App.~\ref{tab:hetero-scale-full}.}
    \label{fig:hetero-scale}
    \vspace{-6mm}
\end{figure}

\noindent \textbf{Empirical Evidence.}
We introduce four dataset variants to train \textsc{UniShot}. To mimic conventional single-source training regime while covering all six SLU dimensions, we first construct \textit{Homogeneous 4K (Hom. 4K)}. We primarily sample from the most comprehensive existing SLU dataset, ShotBench~\cite{shotbench} (the corresponding large-scale training dataset, ShotQA, is not open-sourced). Specifically, we obtain approximately 500 QA pairs for each of its 7 tasks under the \textsc{SLU-SUITE} train-test split, and supplement it with 500 samples for the missing \texttt{Cuts} dimension, yielding roughly 4K samples. Next, we introduce three heterogeneous variants. \textit{Het. 4K} matches the 4K budget but randomly samples from the full \textsc{SLU-SUITE}. Finally, \textit{Het. 100K} and \textit{Het. 410K} (the full training dataset) progressively scale up this heterogeneous pool.  As in Fig.~\ref{fig:hetero-scale}: \textbf{(1) Heterogeneity provides a significant gain at a matched budget}. Replacing the relatively homogeneous 4K control with a heterogeneous 4K mixture improves the average accuracy from 0.492 to 0.585 ($+0.093$). \textbf{(2) Scale progressively strengthens performance once heterogeneity is present}. Scaling the heterogeneous pool from 4K to 100K and then to 410K further improves the average accuracy from 0.585 to 0.625 and 0.637.

\noindent \textbf{Theoretical Analysis.}
To rigorously justify the empirical necessity of data heterogeneity and scale (Sec.~\ref{sec:ablation}), we view SLU annotation through a causal lens (Fig.~\ref{sup:fig:causal_graph_main}). We prove that large-scale multi-source pooling resolves source-induced annotation bias, rendering true semantics identifiable and statistically stable. Full propositions and proofs are deferred to App.~\ref{sup:sec:theory}.
\noindent\textbf{Formulation.} 
Let $H=\phi(X)$ be the semantic representation of a visual input $X$. The observed label $y$ from source $s \in \mathcal{S}$ is governed by a log-linear generation model: $P_s(y \mid H) \propto \exp(f^*(H,y) + b_s(y)) C_s(y)$, where $f^*(H,y)$ is the true \emph{source-invariant semantic compatibility}, $C_s(y) \in \{0,1\}$ is the \emph{structural mask} (defining $s$'s taxonomy), and $b_s(y) \in \mathbb{R}$ is the \emph{preference offset} (subjective annotator bias). 
Under a single-source regime $s_0$, the preference bias is perfectly absorbed into the learned score, i.e., $\tilde{f}(H,y) = f^*(H,y) + b_{s_0}(y)$. The model irrevocably overfits to $s_0$'s subjective bias while $C_{s_0}(y)$ strictly hides out-of-taxonomy labels, making the true semantics $f^*$ unidentifiable.

\vspace{1mm}
\noindent\textbf{Insight 1: Heterogeneity makes bias observable.} 
Breaking this non-identifiability strictly requires \emph{multi-source pooling} (heterogeneity). The mixed training conditional over the pooled data is $P_{\text{tr}}(y \mid H) = \sum_{s \in \mathcal{S}} \alpha_s(H) P_s(y \mid H)$. We prove (Proposition 1) that under mild regularity assumptions, this marginalizes out source-specific preferences, yielding $P_{\text{tr}}(y \mid H) \propto \exp(f^*(H,y) + \kappa(y))$, where $\kappa(y) = \log \sum_s \pi_s C_s(y) \exp(b_s(y) - a_s)$ acts as a composite label prior. Exposing the model to overlapping but non-identical taxonomies effectively neutralizes extreme single-source biases, decoupling $f^*$ from subjective artifacts.

\vspace{1mm}
\noindent\textbf{Insight 2: Scale stabilizes bias estimation.} 
While heterogeneity guarantees identifiability, \emph{large-scale data} is required to stabilize the composite prior $\kappa(y)$. We prove (Proposition 2) that the uniform estimation error of $\kappa$ is bounded by $\mathcal{O}\big(1 / (M_{\min} \sqrt{|\mathcal{S}|})\big)$, where $M_{\min}$ captures the expected label coverage across all sources. Therefore, statistical stability strictly demands both broad source diversity ($|\mathcal{S}| \to \text{large}$) and massive data scale to ensure sufficient label coverage ($M_{\min}$).

\vspace{-2mm}
\section{Conclusion}
\label{sec:conclusion}
\vspace{-1mm}
In this work, we introduce \textsc{SLU-SUITE}, the first general SLU dataset and testbed that enables comprehensive SLU evaluations, with accompanying training datasets at scale. We reveal two key insights: current VLMs are bottlenecked by semantic misalignment more than visual perception; transfer across SLU dimensions is broadly useful but highly uneven and asymmetric. We further propose dedicated data-centric solutions to successfully train two universal SLU models: \textsc{UniShot} and \textsc{AgentShots}, achieving SOTA performance on both in-domain and out-of-domain tasks.

\section*{Acknowledgment}

This work was partly supported by the NSF (Expeditions CCF-1918770, CAREER IIS-2028586, Medium IIS-1955883, Medium IIS-2403240, Medium IIS-2106961), NIH (1R01HL184139), CDC MInD program, Meta, and Dolby faculty gifts.
\bibliography{custom}

\appendix

\clearpage
\appendix

\section*{Appendix}
\section{SLU-SUITE Construction, Split, and QA Unification Details}
\label{app:slu_suite_protocol}

SLU-SUITE is built by unifying 11 public human-labeled or human-verified sources into a common training-and-evaluation suite.
We retain sources only when they contribute meaningful coverage to the six dimensions in Section~\ref{sec:suite}.

During construction, we first collect the original media and annotations for each retained source and discard invalid entries, such as missing files, broken links, corrupted media, or decoding failures.
We then remove repeated media entries before forming the final training and evaluation splits.
After cleaning, we convert all retained annotations into a unified QA interface.

For classification-style tasks, the question explicitly enumerates the candidate options.
Multi-label tasks are converted into choose-all-that-apply QA, binary tasks remain binary QA, and captioning-style annotations are preserved as open-ended supervision.
The captioning-style supervision in SLU-SUITE mainly comes from motion-related tasks, which is why it is used for training only and not for final evaluation.

For evaluation, 11 classification-style task variants from CineTechBench~\cite{cinetechbench} and CameraBench~\cite{camerabench} are fully held out from training and used only for OOD testing.
Among the remaining 22 task variants, we split the 12 variants with more than 3K samples into 80\% training and 20\% ID test partitions, while the other 10 lower-resource variants are used as auxiliary training-only tasks.
From every task variant included in training, we further reserve 20 samples as a small validation split for model selection and dynamic data scheduling.
In addition, any evaluation sample whose media appears in the training pool is removed to reduce cross-source leakage.

We provide the full task-level descriptions in Appendix~\ref{app:slu_suite_task_variants} and the prompt templates used for QA conversion in Appendix~\ref{sec:prompt_templates}.
\section{Detailed Description of SLU-SUITE Task Variants}
\label{app:slu_suite_task_variants}

This section provides task-level definitions for the task variants included in SLU-SUITE.
We follow the same six film-grounded dimensions as in Section~\ref{sec:suite}.
For each task variant, we report its source subset in our suite, its modality when relevant, its option set / label space, and the visual reasoning capability it targets.
We keep semantically related tasks separate whenever their option sets differ, since different option sets correspond to different expert-defined decision boundaries and granularity levels.
Unless otherwise noted, the sample counts reported below refer to the full task partitions available in our V3 dataset engine before the small per-task validation holdout used in the main experiments.

\subsection{Lighting}
Lighting in SLU-SUITE covers source-aware illumination recognition, lighting style identification, attribute-level light characterization, and color-driven mood perception.

\paragraph{SourceCondition (ShotBench; 405 image QA pairs).}
This task is sourced from the ShotBench lighting-type subset (\url{https://vchitect.github.io/ShotBench-project/}) and asks the model to identify the dominant lighting source or environmental lighting condition in a still frame. Candidate answers are dynamically sampled from a broader vocabulary that includes conditions such as daylight, artificial light, fluorescent light, firelight, LED, HMI, moonlight, overcast light, practical light, sunny light, tungsten light, and mixed-light setups. In SLU-SUITE, we use all 405 available QA pairs from this subset. The task therefore emphasizes source-sensitive illumination recognition rather than high-level aesthetic judgment.

\paragraph{Style (CineTechBench; 110 image QA pairs).}
This task is drawn from the CineTechBench lighting subset (\url{https://huggingface.co/datasets/Xinran0906/CineTechBench}) and evaluates whether a model can recognize canonical lighting styles used in cinematography. Depending on the prompt instance, the label space is instantiated as direction-based options (\emph{Back Light}, \emph{Side Light}, \emph{Top Light}), hardness-based options (\emph{Hard Light}, \emph{Soft Light}), or key-based options (\emph{High Key}, \emph{Low Key}). In SLU-SUITE, we use all 110 available QA pairs from this subset. The task targets expert-oriented reasoning about the dominant lighting style rather than merely detecting the light source.

\paragraph{Attribute (ShotBench; 350 image QA pairs).}
This task is sourced from the ShotBench lighting subset (\url{https://vchitect.github.io/ShotBench-project/}) and focuses on fine-grained lighting attributes in still images. Its candidate labels are dynamically composed from an attribute pool including directional properties (\emph{Backlight}, \emph{Side Light}, \emph{Top Light}, \emph{Underlight}, \emph{Edge Light}), contrast-related properties (\emph{High Contrast}, \emph{Low Contrast}), and appearance descriptors such as \emph{Hard Light}, \emph{Soft Light}, and \emph{Silhouette}. In SLU-SUITE, we use all 350 available QA pairs from this subset. Because the options often appear as multi-attribute combinations, this variant captures composite lighting configurations that are common in real cinematographic frames.

\paragraph{ColorPalette (CineTechBench; 60 image QA pairs).}
This task is sourced from the CineTechBench color subset (\url{https://huggingface.co/datasets/Xinran0906/CineTechBench}) and asks the model to classify the dominant cinematic color palette of an image. The label space is a six-way taxonomy consisting of \emph{Red}, \emph{Yellow}, \emph{Blue}, \emph{Green}, \emph{Purple}, and \emph{Black and White}. In SLU-SUITE, we use all 60 available QA pairs from this subset. In our suite, this task complements the lighting tasks by modeling color-driven atmosphere and visual tone.

\subsection{Composition}
The composition dimension focuses on how visual elements are organized within the frame, ranging from explicit composition rules to dominant arrangement patterns and spatial weight distribution.

\paragraph{CompositionRule (KUPCP; 4{,}242 image QA pairs).}
This task is sourced from the KU-PCP composition dataset introduced in \url{https://www.sciencedirect.com/science/article/pii/S1047320318301147} and asks the model to identify which photographic composition rules are present in an image. The label space contains nine rule categories: \emph{Rule of Thirds}, \emph{Vertical}, \emph{Horizontal}, \emph{Diagonal}, \emph{Curved}, \emph{Triangle}, \emph{Center}, \emph{Symmetric}, and \emph{Pattern}. In SLU-SUITE, we use all 4{,}242 available QA pairs from this subset. Since multiple rules may co-occur in a single frame, the task emphasizes structured multi-rule reasoning over frame layout.

\paragraph{CompositionPattern (CineTechBench; 120 image QA pairs).}
This task is drawn from the CineTechBench composition subset (\url{https://huggingface.co/datasets/Xinran0906/CineTechBench}) and focuses on identifying the dominant compositional principle of a frame. The label space is a seven-way taxonomy: \emph{Symmetrical}, \emph{Central}, \emph{Diagonal}, \emph{Rule of Thirds}, \emph{Framing}, \emph{Curved Line}, and \emph{Horizontal}. In SLU-SUITE, we use all 120 available QA pairs from this subset. Compared with CompositionRule, this task emphasizes the primary organizing pattern of the image rather than enumerating all applicable rules.

\paragraph{VisualWeightPlacement (ShotBench; 479 image QA pairs).}
This task is sourced from the ShotBench composition subset (\url{https://vchitect.github.io/ShotBench-project/}) and evaluates how visual weight is distributed across the frame. Candidate answers are dynamically drawn from a broader pool including concepts such as \emph{Balanced}, \emph{Left Heavy}, \emph{Right Heavy}, \emph{Center}, \emph{Symmetrical}, and \emph{Short Side}, with many prompts presenting compound alternatives. In SLU-SUITE, we use all 479 available QA pairs from this subset. This variant therefore targets perceptual reasoning about balance and spatial emphasis instead of explicit rule naming.

\subsection{Viewpoint}
The viewpoint dimension covers camera angle and camera height, including both standard taxonomies and more specialized angle variants used in cinematic analysis.

\paragraph{Angle\_Basic (CineScale2; 9{,}129 image QA pairs).}
This task is sourced from the CineScale2 camera-angle subset (\url{https://cinescale.github.io/camera_al/#dataset}) and evaluates a standard five-way angle taxonomy. The labels are \emph{overhead}, \emph{high}, \emph{neutral}, \emph{low}, and \emph{dutch}. In SLU-SUITE, we use all 9{,}129 available QA pairs from this subset. It serves as the basic viewpoint task in our suite and captures coarse vertical angle changes commonly used in film language.

\paragraph{Angle\_Extended (CineTechBench; 120 image QA pairs).}
This task is drawn from the CineTechBench angle subset (\url{https://huggingface.co/datasets/Xinran0906/CineTechBench}) and extends the standard angle formulation with more specialized cinematic categories. Depending on the prompt instance, the options come from either \{\emph{Diagonal Angle}, \emph{Profile Shot}, \emph{Back Shot}\} or \{\emph{High Angle Shot}, \emph{Low Angle Shot}, \emph{Bird's Eye View}, \emph{Worm's Eye View}\}. In SLU-SUITE, we use all 120 available QA pairs from this subset. The task therefore emphasizes expert-level distinctions that go beyond the basic high-versus-low viewpoint taxonomy.

\paragraph{Angle\_Finegrained (ShotBench; 455 image QA pairs).}
This task is sourced from the ShotBench camera-angle subset (\url{https://vchitect.github.io/ShotBench-project/}) and targets fine-grained viewpoint recognition in still images. The candidate options are dynamically sampled from a larger pool containing \emph{Aerial}, \emph{Overhead}, \emph{High Angle}, \emph{Low Angle}, and \emph{Dutch Angle}, often combined into multi-part alternatives. In SLU-SUITE, we use all 455 available QA pairs from this subset. Relative to Angle\_Basic, this variant places greater emphasis on subtle distinctions and ambiguous boundary cases between nearby angle categories.

\paragraph{Angle\_Cartoon (Cartoon; 8{,}659 image QA pairs).}
This task is sourced from the Cartoon camera-angle subset (\url{https://cinescale.github.io/anime/#dataset}) and applies the same basic viewpoint reasoning to illustrated or animated imagery. The label space contains five options: \emph{dutch}, \emph{high}, \emph{low}, \emph{neutral}, and \emph{overhead}. In SLU-SUITE, we use all 8{,}659 available QA pairs from this subset. Its inclusion broadens the visual domain of viewpoint understanding beyond live-action data.

\paragraph{Height (CineScale2; 15{,}619 image QA pairs).}
This task is drawn from the CineScale2 camera-level subset (\url{https://cinescale.github.io/camera_al/#dataset}) and asks the model to infer camera height from a still frame. The six labels are \emph{aerial\_level}, \emph{eye\_level}, \emph{shoulder\_level}, \emph{hip\_level}, \emph{knee\_level}, and \emph{ground\_level}. In SLU-SUITE, we use all 15{,}619 available QA pairs from this subset. The task focuses on the physical vertical placement of the camera rather than its tilt or angle.

\paragraph{Height\_Cartoon (Cartoon; 3{,}760 image QA pairs).}
This task is sourced from the Cartoon camera-level subset (\url{https://cinescale.github.io/anime/#dataset}) and transfers the same camera-height reasoning problem to illustrated imagery. The label space consists of \emph{aerial}, \emph{eye}, \emph{ground}, \emph{hip}, \emph{knee}, and \emph{shoulder}. In SLU-SUITE, we use all 3{,}760 available QA pairs from this subset. In our suite, it complements Height by testing whether camera-level understanding generalizes across visual styles.

\subsection{Coverage}
The coverage dimension models how much of the subject or scene is included in the frame, spanning classical shot scales, extended scale taxonomies, staging-oriented framing labels, and focal-length cues.

\paragraph{Scale\_Basic (MovieShots; 21{,}614 video QA pairs).}
This task is sourced from the MovieShots shot-size subset released under MovieNet (\url{https://movienet.github.io/}) and uses a five-way shot-scale taxonomy over video clips. The labels are \emph{Long Shot}, \emph{Full Shot}, \emph{Medium Shot}, \emph{Close-Up Shot}, and \emph{Extreme Close-Up Shot}. In SLU-SUITE, we use all 21{,}614 available QA pairs from this subset. It serves as the basic video-based scale recognition task in our suite.

\paragraph{Scale\_Classic (CineTechBench; 140 image QA pairs).}
This task is drawn from the CineTechBench scale subset (\url{https://huggingface.co/datasets/Xinran0906/CineTechBench}) and follows a classical seven-level cinematic shot-scale taxonomy. The labels are \emph{Extreme Close-Up}, \emph{Close-Up}, \emph{Medium Close-Up}, \emph{Medium Shot}, \emph{Medium Long Shot}, \emph{Long Shot}, and \emph{Extreme Long Shot}. In SLU-SUITE, we use all 140 available QA pairs from this subset. Compared with Scale\_Basic, it offers a more canonical and evenly graded coverage hierarchy for expert-oriented shot-scale assessment.

\paragraph{Scale\_Extended (CinemaScale; 19{,}980 video QA pairs).}
This task is sourced from the CinemaScale shot-size subset (\url{https://cinescale.github.io/shotscale/#dataset}) and extends the standard scale taxonomy with additional categories that capture more specialized framing conditions. Its nine labels are \emph{Foreground Shot}, \emph{Extreme Close-Up}, \emph{Close-Up}, \emph{Medium Close-Up}, \emph{Medium Shot}, \emph{Medium Long Shot}, \emph{Long Shot}, \emph{Extreme Long Shot}, and \emph{Insert Shot}. In SLU-SUITE, we use all 19{,}980 available QA pairs from this subset. This formulation covers a broader range of framing outcomes than the basic shot-size taxonomy.

\paragraph{Scale\_Finegrained (ShotBench; 485 image QA pairs).}
This task is drawn from the ShotBench shot-size subset (\url{https://vchitect.github.io/ShotBench-project/}) and targets fine-grained scale reasoning in still images. The candidate answers are dynamically sampled from a neighboring-scale pool including labels such as \emph{Extreme Close Up}, \emph{Close Up}, \emph{Medium Close Up}, \emph{Medium}, \emph{Medium Wide}, \emph{Wide}, and \emph{Extreme Wide}, with many prompts presenting closely related alternatives together. In SLU-SUITE, we use all 485 available QA pairs from this subset. The task is particularly useful for probing difficult decision boundaries between adjacent shot sizes.

\paragraph{Scale\_Cartoon (Cartoon; 5{,}576 image QA pairs).}
This task is sourced from the Cartoon shot-size subset (\url{https://cinescale.github.io/anime/#dataset}) and applies scale classification to illustrated frames. The label space consists of \emph{Close-up-Shot}, \emph{Medium-Shot}, and \emph{Long-Shot}. In SLU-SUITE, we use all 5{,}576 available QA pairs from this subset. This variant introduces a stylized visual domain while preserving a compact scale taxonomy.

\paragraph{Scale\_Historical (HistShotScale; 1{,}885 image QA pairs).}
This task is drawn from the HistShotScale subset (\url{https://github.com/dahe-cvl/ICPRAM2022_histshotV1}) and focuses on shot-size recognition in historical film imagery. The six labels are \emph{Close-Up}, \emph{Extreme-Long-Shot}, \emph{Intertitle}, \emph{Long-Shot}, \emph{Medium-Shot}, and \emph{Not Available}. In SLU-SUITE, we use all 1{,}885 available QA pairs from this subset. The presence of non-standard categories such as \emph{Intertitle} and \emph{Not Available} makes this subset distinct from conventional shot-scale benchmarks.

\paragraph{Staging (FilmShots; 1{,}508 image QA pairs).}
This task is sourced from the FilmShots shot-type subset, which in our local curation is derived from stills originally collected from Shot Cafe (\url{https://shot.cafe/}). The six labels are \emph{clean single}, \emph{close-up}, \emph{extreme wide}, \emph{medium shot}, \emph{profile shot}, and \emph{wide shot}. In SLU-SUITE, we use all 1{,}508 available QA pairs from this subset. Unlike pure scale tasks, it blends subject count, orientation, and staging cues into a compact framing taxonomy.

\paragraph{StagingScaleMix (ShotBench; 445 image QA pairs).}
This task is drawn from the ShotBench shot-framing subset (\url{https://vchitect.github.io/ShotBench-project/}) and models staging-oriented framing configurations. Candidate answers are dynamically composed from labels such as \emph{Insert}, \emph{Single}, \emph{Over the Shoulder}, \emph{2 Shot}, \emph{3 Shot}, \emph{Group Shot}, and \emph{Establishing Shot}, often appearing as compound alternatives. In SLU-SUITE, we use all 445 available QA pairs from this subset. The task therefore mixes subject arrangement and framing scale into a unified decision space.

\paragraph{FocalLength (CineTechBench; 60 image QA pairs).}
This task is sourced from the CineTechBench focal-length subset (\url{https://huggingface.co/datasets/Xinran0906/CineTechBench}) and asks the model to infer the lens category used for image capture. The label space contains \emph{Standard Lens}, \emph{Medium Focal Lens}, \emph{Telephoto Lens}, \emph{Fisheye Lens}, and \emph{Macro Lens}. In SLU-SUITE, we use all 60 available QA pairs from this subset. In our suite, this task complements shot-scale reasoning with a lens-based account of framing geometry.

\subsection{Motion}
The motion dimension covers both direct camera-movement classification and question-answering formulations over motion presence, complexity, direction, shaking, speed, and free-form motion description.

\paragraph{Move\_Coarse (MovieShots; 21{,}614 video QA pairs).}
This task is sourced from the MovieShots camera-movement subset released under MovieNet (\url{https://movienet.github.io/}) and uses a coarse five-way motion taxonomy over video clips. The labels are \emph{Static Shot}, \emph{Motion Shot}, \emph{Push Shot}, \emph{Pull Shot}, and \emph{Multi\_movement}. In SLU-SUITE, we use all 21{,}614 available QA pairs from this subset. It provides a compact entry point for motion recognition before moving to more compositional formulations.

\paragraph{Move\_Compound\_A (ShotBench; 464 video QA pairs).}
This task is drawn from the ShotBench camera-movement subset (\url{https://vchitect.github.io/ShotBench-project/}) and focuses on compound or fine-grained motion labels. Candidate options are dynamically sampled from a broad pool containing actions such as \emph{Pan Left}, \emph{Pan Right}, \emph{Tilt Up}, \emph{Tilt Down}, \emph{Zoom In}, \emph{Pull Out}, \emph{Arc}, \emph{Static}, and many two-stage compositions such as ``pan left, then tilt down.'' In SLU-SUITE, we use all 464 available QA pairs from this subset. This subset is designed to probe difficult compositional motion reasoning rather than coarse motion detection alone.

\paragraph{Move\_Compound\_B (CineTechBench; 93 video QA pairs).}
This task is sourced from the CineTechBench camera-movement subset (\url{https://huggingface.co/datasets/Xinran0906/CineTechBench}) and evaluates expert-oriented motion categories over short clips. Its option space includes both single movements and compositional patterns, with examples such as \emph{Fixed Shot}, \emph{Crane Shot}, \emph{Dolly In Shot}, \emph{Dolly Out Shot}, \emph{Rolling Clockwise Shot}, and combined labels such as \emph{Pan Left Shot with Zoom Out Shot}. In SLU-SUITE, we use all 93 available QA pairs from this subset. Compared with Move\_Compound\_A, this variant uses a more formal cinematic naming style.

\paragraph{Move\_Captioning (CameraBench; 35{,}050 video QA pairs).}
This task is drawn from the CameraBench motion-caption subset (\url{https://linzhiqiu.github.io/papers/camerabench/}) and is the only open-ended motion variant in our suite. Instead of selecting from a predefined option list, the model is asked to summarize the camera movement in natural language through prompts such as ``How does the camera move in this video?'' or ``Summarize the camera movement in this video.'' In SLU-SUITE, we use all 35{,}050 available QA pairs from this subset. The task targets descriptive motion understanding rather than closed-set classification.

\paragraph{VQA\_Complexity (CameraBench; 4{,}284 video QA pairs).}
This task is sourced from the CameraBench motion-complexity VQA subset (\url{https://linzhiqiu.github.io/papers/camerabench/}) and formulates motion understanding as binary yes/no questions. The questions query whether the clip exhibits \emph{no camera motion}, \emph{minor motion}, \emph{simple motion}, or \emph{complex motion}. In SLU-SUITE, we use all 4{,}284 available QA pairs from this subset. It isolates complexity perception as a separate reasoning capability within motion analysis.

\paragraph{VQA\_Movement (CameraBench; 17{,}459 video QA pairs).}
This task is drawn from the CameraBench movement VQA subset (\url{https://linzhiqiu.github.io/papers/camerabench/}) and asks yes/no questions about concrete motion primitives. Typical prompts query actions such as panning left or right, tilting up or down, trucking, dollying, or zooming. In SLU-SUITE, we use all 17{,}459 available QA pairs from this subset. The task emphasizes recognition of specific motion directions and operators in video clips.

\paragraph{VQA\_Shaking (CameraBench; 3{,}896 video QA pairs).}
This task is sourced from the CameraBench shaking VQA subset (\url{https://linzhiqiu.github.io/papers/camerabench/}) and evaluates whether the clip exhibits different levels of camera instability. The underlying labels include states such as \emph{no shaking}, \emph{minimal shaking}, \emph{unsteady}, and \emph{very unsteady}, each queried through binary yes/no prompts. In SLU-SUITE, we use all 3{,}896 available QA pairs from this subset. This variant specifically targets stability perception rather than intentional camera movement.

\paragraph{VQA\_Speed (CameraBench; 3{,}213 video QA pairs).}
This task is drawn from the CameraBench speed VQA subset (\url{https://linzhiqiu.github.io/papers/camerabench/}) and asks whether the observed motion speed is \emph{slow}, \emph{regular}, or \emph{fast}. Although implemented through yes/no questions, the task functions as a speed-sensitive motion taxonomy over short clips. In SLU-SUITE, we use all 3{,}213 available QA pairs from this subset. It complements the movement and complexity tasks by isolating temporal intensity.

\paragraph{VQA\_Presence (CameraBench; 157{,}552 video QA pairs).}
This task is sourced from the CameraBench imbalanced raw VQA subset (\url{https://linzhiqiu.github.io/papers/camerabench/}) and asks whether a specific motion description is present in the clip. The queried descriptions cover both atomic and compositional patterns, including examples such as craning downward in an arc, tracking from a specific viewpoint, or moving forward through the scene. In SLU-SUITE, we use all 157{,}552 available QA pairs from this subset. Because it contains a large and diverse set of motion predicates, this subset provides broad coverage of motion-presence recognition.

\paragraph{VQA\_MixedType (CameraBench; 38{,}672 video QA pairs).}
This task is drawn from the CameraBench balanced VQA subset (\url{https://linzhiqiu.github.io/papers/camerabench/}) and mixes multiple motion attributes within a unified yes/no format. Its prompts cover presence and absence, motion speed, motion complexity, stability, and descriptive matching cases, for example asking whether the camera is completely still or whether a clip matches a given motion description. In SLU-SUITE, we use all 38{,}672 available QA pairs from this subset. In our suite, this variant complements VQA\_Presence by offering a more balanced and heterogeneous motion-question distribution.

\subsection{Cuts}
The cuts dimension models transitions between shots and focuses on the semantics of inter-shot editing decisions in video.

\paragraph{InterShotCutType (MovieCuts; 109{,}827 video QA pairs).}
This task is sourced from the MovieCuts cut-type subset (\url{https://github.com/PardoAlejo/MovieCuts}) and asks the model to identify the type of cut used in a video example. The label space contains eight categories: \emph{cut-on-action}, \emph{cut-away}, \emph{cross-cut}, \emph{emphasis/deemphasis}, \emph{match-cut}, \emph{smash-cut}, \emph{reaction-in/reaction-out-cut}, and \emph{speaker-change}. In SLU-SUITE, we use all 109{,}827 available QA pairs from this subset. This large-scale subset provides dedicated supervision for understanding inter-shot editing grammar.

\section{Question and Answer Prompt Templates}
\label{sec:prompt_templates}

We convert heterogeneous SLU annotations into unified QA pairs using task-specific prompt templates.
Figure~\ref{fig:slu_prompt_mcq} shows the main multiple-choice template.
In every multimodal prompt, the visual placeholder token is placed on the first line and is instantiated as either \texttt{<image>} or \texttt{<video>} according to the source modality.
Most task variants use multiple-choice templates with explicit answer-format instructions; multi-label tasks use choose-all-that-apply templates; CameraBench contributes additional binary VQA templates; and motion-related captioning tasks use open-ended templates that are used only during training.

\begin{figure*}[h]
    \centering
    \begin{tcolorbox}
    \begin{lstlisting}[breaklines=true]
<VISUAL_TOKEN>
You are a cinematography expert.
Question: {task-specific question}
Options:
A. {option_1}
B. {option_2}
C. {option_3}
D. {option_4}
...
If multiple options are correct, connect them with ' && '.
Respond with 'Letter. OptionText' (for multiple: 'A. ... && B. ...').
    \end{lstlisting}
    \end{tcolorbox}
    \caption{Unified multiple-choice prompt template used for the majority of SLU-SUITE tasks, including composition, coverage, viewpoint, lighting, motion classification, and cut-type recognition. The multimodal placeholder is always placed at the first line.}
    \label{fig:slu_prompt_mcq}
\end{figure*}

\section{Detailed Data Partitions: Train, ID, and OOD}
\label{app:data_splits}

The exact task assignments are detailed below:

\begin{itemize}[leftmargin=*, topsep=4pt, itemsep=4pt]
    \item \textbf{OOD Evaluation Partition (11 tasks):} These tasks are completely held out during training and used strictly for OOD testing to evaluate zero-shot or generalization capabilities.
    \begin{itemize}[noitemsep, topsep=2pt]
        \item CompositionPattern
        \item ScaleClassic
        \item FocalLength
        \item AngleExtended
        \item MoveCompound
        \item VQAMovement
        \item VQAComplexity
        \item VQAShaking
        \item VQASpeed
        \item LightingStyle
        \item ColorPalette
    \end{itemize}

    \item \textbf{ID Training \& Evaluation Partition (12 tasks):} These tasks have large data volumes ($>3$K samples). We randomly split each task into 80\% for training and 20\% for ID evaluation.
    \begin{itemize}[noitemsep, topsep=2pt]
        \item CompositionRule
        \item ScaleExtended
        \item ScaleHistorical
        \item ScaleCartoon
        \item ScaleBasic
        \item Staging
        \item AngleBasic
        \item AngleCartoon
        \item Height
        \item HeightCartoon
        \item MoveCoarse
        \item InterShotCutType
    \end{itemize}

    \item \textbf{Training-Only Partition (10 tasks):} All remaining tasks are injected 100\% into the training set to enrich the model's semantic representations and task diversity. They are not explicitly evaluated in the ID test set.
    \begin{itemize}[noitemsep, topsep=2pt]
        \item VisualWeightPlacement
        \item Scale\_Finegrained
        \item StagingScaleMix
        \item Angle\_Finegrained
        \item Move\_Compound\_A
        \item Move\_Captioning
        \item VQA\_Presence
        \item VQA\_MixedType
        \item SourceCondition
        \item Attribute
    \end{itemize}
\end{itemize}


\subsection{Transfer-In and Transfer-Out Summary}
\label{app:transfer-summary}

To complement Fig.~\ref{fig:transfer_matrix}, Table~\ref{tab:transfer-summary} summarizes transfer statistics from two perspectives:
\textbf{transfer-in receptiveness}, which measures how often a target benefits from external sources, and
\textbf{transfer-out helpfulness}, which measures how broadly a source dimension improves other targets.
All gains are computed relative to the \textbf{Base} model.

\begin{table*}[t]
    \centering
    \small
    \setlength{\tabcolsep}{5pt}
    \caption{\textbf{Summary of cross-dimensional transfer utility.}
    ``Incoming'' measures how a target benefits from external source dimensions.
    ``Outgoing'' measures how much a source helps other targets.
    Mean gains are averaged over all five off-diagonal targets relative to the Base model.}
    \label{tab:transfer-summary}
    \begin{tabular}{lcccc}
        \toprule
        Dimension & Incoming + / 5 & Mean Incoming Gain & Outgoing + / 5 & Mean Outgoing Gain \\
        \midrule
        Lighting    & 5/5 & +0.122 & 3/5 & +0.078 \\
        Composition & 5/5 & +0.299 & 4/5 & +0.044 \\
        Viewpoint   & 5/5 & +0.032 & 5/5 & +0.108 \\
        Coverage    & 1/5 & -0.010 & 5/5 & +0.115 \\
        Motion      & 5/5 & +0.056 & 4/5 & +0.094 \\
        Cuts        & 4/5 & +0.008 & 4/5 & +0.069 \\
        \bottomrule
    \end{tabular}
\end{table*}

The table highlights the asymmetry already visible in Fig.~\ref{fig:transfer_matrix}.
For example, \textsc{Coverage} is one of the strongest sources in terms of transfer-out helpfulness, but one of the weakest targets in terms of transfer-in receptiveness.
This confirms that cross-dimensional utility in SLU is not symmetric and motivates target-aware data construction.

\newpage

\section{RQ1 Experimental Details}
\label{app:rq1_details}

This appendix provides the experimental details omitted from Sec.~\ref{sec:rq1_bottleneck}.

\paragraph{Controlled setup.}
To isolate module choice from cross-dimensional transfer, each SLU dimension is trained and evaluated independently.
All adapter-placement settings use the same Qwen3-VL-8B backbone, the same dimension-specific train/test split, the same LoRA rank (\(r=32\)), and the same optimizer, batch size, and training schedule.
The only difference is which module receives LoRA updates: \textbf{Connector}, \textbf{Vision}, \textbf{LM}, \textbf{LM+Connector}, or \textbf{All}.

\paragraph{Why \textsc{Viewpoint}-\textbf{Vision} is omitted.}
Vision-only adaptation on \textsc{Viewpoint} frequently produced degenerate generations and could not complete evaluation reliably.
We therefore omit this cell from Table~\ref{tab:ablation-lora-dim} and report both ``Avg. (all 6)'' and ``Avg. (shared 5)'' in the main paper.

\paragraph{Parameter counts.}
The trainable parameter counts in Table~\ref{tab:ablation-lora-dim} correspond to LoRA-updated parameters only, not the full frozen backbone.
This is why \textbf{LM+Connector} is more parameter-efficient than \textbf{All} while still achieving better average performance.

\paragraph{Interpretation.}
The main takeaway of RQ1 is not that visual information is unimportant for SLU, but that under controlled PEFT with a strong pretrained VLM, the strongest \emph{recoverable} gains concentrate on the language model and connector.
This motivates \textbf{LM+Connector} as the shared backbone for all later analyses and models.

\section{Transfer Summary for RQ2}
\label{app:transfer_details}

Table~\ref{tab:transfer-summary} summarizes the transfer statistics underlying Fig.~\ref{fig:transfer_matrix}.
For each target dimension, we report how many off-diagonal sources outperform the pretrained baseline (\textbf{transfer-in positive count}) and the mean off-diagonal gain over the baseline.
For each source dimension, we analogously report \textbf{transfer-out positive count} and mean gain across all other targets.

\begin{table*}[t]
\centering
\small
\setlength{\tabcolsep}{5pt}
\caption{\textbf{Transfer-in and transfer-out summary derived from Fig.~\ref{fig:transfer_matrix}.}
Positive count reports how many off-diagonal transfers improve over the pretrained baseline.
Mean gain is computed as the average off-diagonal improvement over the baseline under the natural SLU-SUITE regime.}
\label{tab:transfer-summary}
\resizebox{0.8\textwidth}{!}{%
\begin{tabular}{lcc|lcc}
\toprule
\multicolumn{3}{c|}{\textbf{Target-side transfer-in}} & \multicolumn{3}{c}{\textbf{Source-side transfer-out}} \\
\textbf{Target} & \textbf{Positive Count} & \textbf{Mean Gain} & \textbf{Source} & \textbf{Positive Count} & \textbf{Mean Gain} \\
\midrule
Lighting    & 5 / 5 & +0.122 & Lighting    & 3 / 5 & +0.078 \\
Composition & 5 / 5 & +0.299 & Composition & 4 / 5 & +0.044 \\
Viewpoint   & 5 / 5 & +0.032 & Viewpoint   & 5 / 5 & +0.108 \\
Coverage    & 1 / 5 & -0.010 & Coverage    & 5 / 5 & +0.115 \\
Motion      & 5 / 5 & +0.056 & Motion      & 4 / 5 & +0.094 \\
Cuts        & 4 / 5 & +0.008 & Cuts        & 4 / 5 & +0.069 \\
\bottomrule
\end{tabular}}
\vspace{-3mm}
\end{table*}

\paragraph{Discussion.}
The table quantifies three patterns already visible in Fig.~\ref{fig:transfer_matrix}.
First, transfer is broadly positive: \(25\) of the \(30\) off-diagonal source--target pairs outperform the pretrained baseline.
Second, transfer-in utility is highly target-dependent: \textsc{Coverage} is difficult to help, while \textsc{Composition} and \textsc{Lighting} benefit strongly from auxiliary sources.
Third, transfer is asymmetric: \textsc{Coverage} is the strongest donor by mean transfer-out gain but the weakest recipient by mean transfer-in gain.

\section{UniShot Implementation Details}
\label{app:unishot_details}

This appendix expands Sec.~\ref{sec:unishot} with implementation details that are omitted from the main paper for space.

\paragraph{Dimension-level reweighting.}
UniShot reweights training at the \emph{dimension level}.
At each sampling step, it first samples one source dimension according to the current distribution \(p^{(t)}\), and then draws training instances from that dimension.
This design directly matches the high-level transfer matrix in Fig.~\ref{fig:transfer_matrix}.

\paragraph{Validation construction.}
Every training task variant contributes a small held-out validation set.
In our implementation, we reserve 20 samples from each training task variant.
The dimension-level validation score \(V_j^{(t)}\) is computed as the macro-average over all task variants belonging to dimension \(j\).
These validation samples are used only to estimate a coarse under-optimization signal for reweighting, not for final model reporting.

\paragraph{Target-only anchors.}
The target-only anchor \(A_{j\rightarrow j}\) is the performance obtained by training only on dimension \(j\) and evaluating on \(j\).
It serves as a practical reference for how far the current balanced model still lags behind a dimension-specialized solution.

\paragraph{Hyperparameters.}
UniShot uses reweighting interval \(K\), exploration coefficient \(\lambda\), and smoothing coefficient \(\beta\) as defined in Sec.~\ref{sec:unishot}.
Please fill in the final numerical values here according to the training run used in the paper.

\begin{algorithm}[t]
\caption{Dynamic Dimension Mixing for UniShot}
\label{alg:dynamic_mixing_app}
\small
\begin{algorithmic}[1]
\Require Dimension set \(\mathcal{D}\); transfer prior \(\tilde{T}_{i\rightarrow j}\); target-only anchors \(A_{j\rightarrow j}\); reweighting interval \(K\); hyperparameters \(\lambda,\beta\)
\Ensure UniShot model \(f\); final sampling distribution \(p\)

\State \textbf{Initialize:} \(p_i^{(0)} \gets 1/|\mathcal{D}|\), for all \(i \in \mathcal{D}\)

\For{reweighting round \(t=0,1,\dots,T-1\)}
    \State Train the current model for \(K\) steps using \(p^{(t)}\)
    \State Compute dimension-level validation scores \(V_j^{(t)}\), for all \(j \in \mathcal{D}\)
    \State \(g_j^{(t)} \gets \left[1-\frac{V_j^{(t)}}{A_{j\rightarrow j}+\epsilon}\right]_+\)
    \State \(u_i^{(t)} \gets \sum_{j\in\mathcal{D}} \tilde{T}_{i\rightarrow j}\, g_j^{(t)}\)
    \State \(\hat{p}_i^{(t+1)} \gets (1-\lambda)\frac{1}{|\mathcal{D}|} + \lambda \frac{u_i^{(t)}}{\sum_{k\in\mathcal{D}} u_k^{(t)}+\epsilon}\)
    \State \(p_i^{(t+1)} \gets (1-\beta)p_i^{(t)} + \beta \hat{p}_i^{(t+1)}\)
\EndFor

\State \Return final model \(f\) and distribution \(p\)
\end{algorithmic}
\end{algorithm}

\paragraph{Interpretation.}
UniShot uses the transfer prior \(\tilde{T}_{i\rightarrow j}\) to estimate which source dimensions are likely to help which targets, and combines it with the current validation gaps \(g_j^{(t)}\) to decide where more training probability mass should go.
In this sense, UniShot is neither purely static nor purely reactive: it starts from a globally estimated transfer structure but updates it according to the current under-optimized dimensions.

\section{Routing Details for AgentShots}
\label{app:agentshots-routing}

AgentShots uses \textbf{prompt-based routing} rather than oracle task IDs or a separately trained routing model.
At inference time, the shared frozen backbone reads the input task prompt together with short descriptions of the six experts and selects one expert for top-1 activation. Because the routing task relies on explicit keyword and semantic matching between the prompt and the six high-level dimension descriptions, the 8B backbone achieves near-perfect routing accuracy, making an oracle ablation unnecessary.

\paragraph{Expert descriptions.}
Each expert corresponds to one high-level SLU dimension:
\textbf{Lighting}, \textbf{Composition}, \textbf{Viewpoint}, \textbf{Coverage}, \textbf{Motion}, and \textbf{Cuts}.
The routing prompt provides a short description of the scope of each expert, e.g., whether the query concerns lighting style, composition pattern, camera angle, shot distance, motion behavior, or edit transition.

\paragraph{Routing prompt template.}
A simplified template is shown below:
\begin{quote}
\small
You are given a visual question about shot language understanding.
Choose the single most suitable expert from the following six options:
(1) Lighting, (2) Composition, (3) Viewpoint, (4) Coverage, (5) Motion, (6) Cuts.
Return only the expert name that best matches the question.
\end{quote}

\paragraph{Why prompt-based routing?}
Our goal is to keep AgentShots lightweight and compatible with the shared backbone.
Instead of introducing a separately trained router, we reuse the backbone's language understanding ability to map the task prompt to the most relevant expert.
This preserves the same active inference-time parameter path as UniShot while enabling expert specialization.

\paragraph{Recommended qualitative examples.}
In the final appendix, it is useful to include two to four qualitative routing examples showing
(i) the input task prompt,
(ii) the selected expert, and
(iii) the final prediction.
Such examples help clarify that routing is driven by prompt semantics rather than dataset-specific IDs.
\newpage

\section{Additional Ablation Results}
\label{app:ablation}

This appendix provides the exact numbers for the ablations summarized in Sec.~\ref{sec:ablation}.
We first report the full recipe-ablation results, including a comparison with \textbf{MoLE}, a generic mixture-of-LoRA baseline.
We then give the full dimension-wise breakdown for the data heterogeneity and scale study, followed by a transfer check on a newer Qwen3.5-VL backbone.

\subsection{Recipe Ablation Details}
\label{app:recipe-ablation}

Tables~\ref{tab:strategy-abs} and~\ref{tab:strategy-norm} provide the full results behind Fig.~\ref{fig:strategy-heatmap}.
Besides the two naive variants, we also compare against \textbf{MoLE}~\cite{wumixture}, which uses the same six experts and top-1 activation budget as AgentShots, but learns routing end-to-end.
The results show that generic multi-LoRA routing alone is not sufficient:
MoLE helps certain dimensions such as lighting, but fails to recover the consistent cross-dimension gains of AgentShots and remains below both UniShot and AgentShots on average.
This again highlights the importance of transfer-aware data construction rather than relying on automatic routing alone.

\begin{table*}[h]
    \centering
    \small
    \setlength{\tabcolsep}{7pt}
    \caption{\textbf{Recipe ablation (absolute scores): AgentShots is best on average, while UniShot improves balance over naive mixing.}
    \textbf{UniShot-naive} directly mixes all six dimensions without dynamic balancing.
    \textbf{AgentShots-naive} trains each expert only on its own target-dimension data.
    \textbf{MoLE} uses the same six experts and top-1 activation budget as AgentShots, but learns routing end-to-end.
    Dynamic balancing mainly improves the weakest dimensions of UniShot, while target-aware mixtures consistently improve AgentShots over isolated expert training.}
    \label{tab:strategy-abs}
    \begin{tabular}{lccccc}
        \toprule
        Dimension & UniShot & UniShot-naive & AgentShots & AgentShots-naive & MoLE \\
        \midrule
        Composition & 0.6117 & 0.6325 & 0.6717 & 0.6230 & 0.6050 \\
        Coverage    & 0.6530 & 0.6560 & 0.6718 & 0.6670 & 0.6211 \\
        Cuts        & 0.4000 & 0.3550 & 0.4250 & 0.3800 & 0.3250 \\
        Lighting    & 0.6303 & 0.5902 & 0.6833 & 0.6610 & 0.7083 \\
        Motion      & 0.7599 & 0.7309 & 0.7667 & 0.7240 & 0.6899 \\
        Viewpoint   & 0.7767 & 0.7690 & 0.7970 & 0.7800 & 0.7277 \\
        \midrule
        Avg.        & 0.6386 & 0.6223 & \textbf{0.6693} & 0.6392 & 0.6128 \\
        \bottomrule
    \end{tabular}
\end{table*}

\begin{table*}[h]
    \centering
    \small
    \setlength{\tabcolsep}{6pt}
    \caption{\textbf{Recipe ablation (normalized scores): UniShot improves the floor, while AgentShots improves every dimension.}
    Results are normalized by \textbf{AgentShots-naive} (\(=1.0\)).
    Values above \(1.0\) indicate improvement over the naive routed-specialist baseline.
    We additionally report the average, worst-case, and second worst-case performance for each method across the six dimensions.
    For \textbf{AgentShots-naive}, all entries are \(1.0\) by definition.}
    \label{tab:strategy-norm}
    \begin{tabular}{lccccc}
        \toprule
        Dimension & UniShot & UniShot-naive & AgentShots & AgentShots-naive & MoLE \\
        \midrule
        Lighting    & 0.9536 & 0.8929 & 1.0337 & 1.0000 & 1.0716 \\
        Composition & 0.9819 & 1.0152 & 1.0782 & 1.0000 & 0.9711 \\
        Viewpoint   & 0.9958 & 0.9859 & 1.0218 & 1.0000 & 0.9329 \\
        Coverage    & 0.9790 & 0.9835 & 1.0072 & 1.0000 & 0.9312 \\
        Motion      & 1.0496 & 1.0095 & 1.0590 & 1.0000 & 0.9529 \\
        Cuts        & 1.0526 & 0.9342 & 1.1184 & 1.0000 & 0.8553 \\
        \midrule
        Avg.        & 1.0021 & 0.9702 & \textbf{1.0530} & 1.0000 & 0.9525 \\
        Worst       & 0.9536 (\textit{Lighting}) & 0.8929 (\textit{Lighting}) & 1.0072 (\textit{Coverage}) & 1.0000 & 0.8553 (\textit{Cuts}) \\
        2nd Worst   & 0.9790 (\textit{Coverage}) & 0.9342 (\textit{Cuts}) & 1.0218 (\textit{Viewpoint}) & 1.0000 & 0.9312 (\textit{Coverage}) \\
        \bottomrule
    \end{tabular}
\end{table*}

\subsection{Controlled Study on Data Heterogeneity and Scale}
\label{app:hetero-scale}

Table~\ref{tab:hetero-scale-full} reports the full per-dimension numbers for the controlled study in Sec.~\ref{sec:ablation}.
At matched 4K size, heterogeneous supervision already brings large gains over homogeneous data, especially for composition and viewpoint.
As the heterogeneous pool grows further, motion and cuts continue to improve, while some dimensions such as coverage and viewpoint start to saturate.
This explains why the average continues to rise with scale, but with smaller marginal gains.

\subsection{Transfer to a Newer Backbone}
\label{app:backbone-transfer}

Finally, we replace the original Qwen3-VL-8B backbone with the newer \textbf{Qwen3.5-VL-9B}, while keeping the UniShot training setting unchanged.
Table~\ref{tab:qwen35_backbone_transfer} shows that SLU-SUITE supervision still brings large gains across all six dimensions.
This result suggests that the value of our training data and task construction is not tied to a single backbone family.

\begin{table}[h]
    \centering
    \small
    \setlength{\tabcolsep}{7pt}
    \caption{\textbf{Transfer to a newer backbone: SLU-SUITE supervision remains effective on Qwen3.5-VL-9B.}
    We replace Qwen3-VL-8B with \textbf{Qwen3.5-VL-9B} and keep the UniShot training setting unchanged.
    SLU-SUITE supervision consistently improves over the raw model across all six shot-language dimensions.}
    \label{tab:qwen35_backbone_transfer}
    \begin{tabular}{lcc}
        \toprule
        Dimension & Raw Qwen3.5 & UniShot on SLU-SUITE \\
        \midrule
        Composition & 0.0617 & 0.6100 \\
        Coverage    & 0.4569 & 0.6662 \\
        Cuts        & 0.2300 & 0.3350 \\
        Lighting    & 0.5326 & 0.7076 \\
        Motion      & 0.6018 & 0.7407 \\
        Viewpoint   & 0.5000 & 0.8027 \\
        \midrule
        Avg.        & 0.3972 & 0.6437 \\
        \bottomrule
    \end{tabular}
\end{table}

\begin{table*}[h]
    \centering
    \small
    \setlength{\tabcolsep}{6pt}
    \caption{\textbf{Data heterogeneity and scale (full breakdown): heterogeneity gives the main jump, while scale adds uneven gains.}
    \textbf{Raw} is the pretrained model without SLU fine-tuning.
    \textbf{Homogeneous 4K} and \textbf{Heterogeneous 4K} use the same training size but differ in source diversity.
    \textbf{Heterogeneous 100K} and \textbf{Heterogeneous 410K} further scale the same heterogeneous pool.
    Note that the lighting split is the same in the homogeneous and heterogeneous 4K settings, so no clear gain is expected for lighting at that scale.}
    \label{tab:hetero-scale-full}
    \begin{tabular}{lccccc}
        \toprule
        Dimension & Raw & Homogeneous 4K & Heterogeneous 4K & Heterogeneous 100K & Heterogeneous 410K \\
        \midrule
        Composition & 0.0617 & 0.3775 & 0.5858 & 0.6217 & 0.6167 \\
        Coverage    & 0.4569 & 0.5178 & 0.5878 & 0.6573 & 0.6407 \\
        Cuts        & 0.2300 & 0.2800 & 0.3200 & 0.3550 & 0.4000 \\
        Lighting    & 0.5326 & 0.6439 & 0.6432 & 0.5939 & 0.6303 \\
        Motion      & 0.6018 & 0.6293 & 0.6882 & 0.7300 & 0.7617 \\
        Viewpoint   & 0.5000 & 0.5057 & 0.6853 & 0.7927 & 0.7753 \\
        \midrule
        Avg.        & 0.3972 & 0.4924 & 0.5851 & 0.6251 & 0.6375 \\
        \bottomrule
    \end{tabular}
\end{table*}

\newpage

\section{Detailed Theoretical Analysis}
\label{sup:sec:theory}

We study two questions in sequence.
First, \emph{why is large-scale multi-source data necessary} for subjective supervision in SLU?
Second, \emph{why is a mixed pointwise + pairwise objective necessary} even after such data is available?
Our goal is not to claim complete debiasing.
Instead, we formalize a narrower but more defensible statement:
\emph{large-scale multi-source data makes source-induced annotation bias observable and statistically stable, while pairwise preference locally contracts the label-only priors that pointwise training tends to absorb.}
\subsection{Formulation: Source-Dependent Annotation Mechanisms}
\label{sup:subsec:theory_formulation}

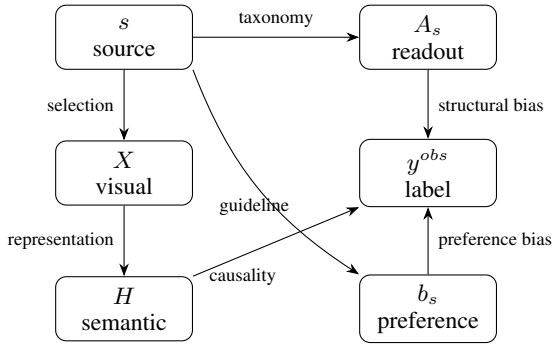
\begin{figure}[t]
\centering
\begin{tikzpicture}[
    x=1cm, y=1cm,
    >=Stealth,
    font=\footnotesize,
    box/.style={
        draw,
        rounded corners,
        minimum width=1.8cm,
        minimum height=0.85cm,
        align=center
    }
]
\node[box] (s) at (0,1.8) {$s$\\source};
\node[box] (X) at (0,0) {$X$\\visual};
\node[box] (H) at (0,-1.8) {$H$\\semantic};

\node[box] (A) at (4.0,1.8) {$A_s$\\readout};
\node[box] (y) at (4.0,0) {$y^{obs}$\\label};
\node[box] (b) at (4.0,-1.8) {$b_s$\\preference};

\draw[->] (s.east) -- node[midway,above] {\scriptsize taxonomy} (A.west);
\draw[->] (s.south) -- node[midway,left] {\scriptsize selection} (X.north);
\draw[->] (X.south) -- node[midway,left] {\scriptsize representation} (H.north);

\draw[->] (A.south) -- node[midway,right] {\scriptsize structural bias} (y.north);
\draw[->] (b.north) -- node[midway,right] {\scriptsize preference bias} (y.south);

\draw[->] (H.north east) -- node[pos=0.3, below=1pt] {\scriptsize causality} (y.south west);

\draw[->] (s.south east) to[bend right=18]
node[pos=0.48,below=2pt] {\scriptsize guideline} (b.north west);
\end{tikzpicture}
\vspace{-1mm}
\caption{
Causal view of subjective supervision in SLU.
The source $s$ affects both the visual distribution and the annotation mechanism.
We separate semantic evidence $H=\phi(X)$ from source-dependent annotation effects:
a taxonomy/readout operator $A_s$ and a preference term $b_s$.
}
\label{sup:fig:causal_graph_main}
\vspace{-2mm}
\end{figure}

We begin with a causal view of source-dependent annotations in SLU.
Let $s \in \mathcal{S}$ denote the data source, $X$ the raw visual input, and
$H=\phi(X)$ a semantic representation.
We analyze $H$ rather than raw $X$ so that annotation subjectivity is separated from low-level source-specific appearance.

\paragraph{Structural mask}

Let $\mathcal{Y}$ be a universal label vocabulary containing every category any expert might use (e.g.\ the union of all shot-scale taxonomies).  
Source~$s$ exposes only a subset of this vocabulary.  We encode this as a \emph{structural mask}
\begin{equation}\label{eq:mask}
  C_s(y)\;\in\;\{0,1\},
\end{equation}
where $C_s(y)=1$ if and only if label~$y$ is available in source~$s$'s taxonomy.  For example, a coarse shot-scale source may set $C_s(\text{medium close-up})=0$, forcing annotators to merge that category into a neighbouring label.
Structural masks capture the \emph{structural bias} arrow $s\to A_s$ in Figure~\ref{sup:fig:causal_graph_main}: the source determines which readout categories exist.

\paragraph{Preference offset.}
Even when two sources share the same label, their annotators may exhibit different tendencies—e.g.\ one team calls ambiguous framings ``medium shot'' while another leans toward
``medium long.''  We model this as a source-dependent \emph{preference offset}
\begin{equation}\label{eq:pref}
  b_s(y)\;\in\;\mathbb{R},
\end{equation}
where a positive $b_s(y)$ indicates that source~$s$ favours label~$y$ beyond what the visual evidence warrants. This corresponds to the \emph{preference bias} arrow
$s\to b_s$ in Figure~\ref{sup:fig:causal_graph_main}.

\paragraph{Source-invariant semantic score.}
The quantity we wish to learn is the \emph{source-invariant semantic compatibility} $f^{*}(H,y)$, which captures how well label~$y$ describes the visual content represented by~$H$.
This corresponds to the arrow $X\to H$ in Figure~\ref{sup:fig:causal_graph_main}.

\paragraph{Label generation model.}
Combining these three components via a log-linear model, the probability that source~$s$ assigns label~$y$ given semantic representation~$H$ is
\begin{equation}\label{eq:label-gen}
  P_s(y\mid H)
  \;=\;
  \frac{
    \exp\bigl(f^{*}(H,y)+b_s(y)\bigr)\,C_s(y)
  }{
    Z_s(H)
  },
\end{equation}
where the partition function sums over available labels only:
\begin{equation}\label{eq:partition}
  Z_s(H)
  \;:=\;
  \sum_{y'\in\mathcal{Y}}
    \exp\bigl(f^{*}(H,y')+b_s(y')\bigr)\,C_s(y').
\end{equation}

\paragraph{Takeaway.}
Equation~\eqref{eq:label-gen} makes explicit that observed labels conflate three distinct effects: semantic evidence ($f^{*}$), structural availability ($C_s$), and annotator preference ($b_s$).  
\emph{A model trained on a single source has no mechanism to separate them.}

\begin{remark}[Scope of the causal formulation]
\label{sup:rem:causal_scope_main}
Figure~\ref{sup:fig:causal_graph_main} and
Equation~\eqref{eq:label-gen}
do not claim that all cross-source differences are removed by the representation $H$.
Instead, they isolate the annotation-side mechanism that our theory studies:
source-dependent support/readout and source-dependent preference.
\end{remark}

\paragraph{Single-source non-identifiability.}
With a single source~$s_0$, the decomposition in \eqref{eq:label-gen} is fundamentally non-unique.  
Define
$\tilde{f}(H,y):=f^{*}(H,y)+b_{s_0}(y)$ and
$\tilde{b}(y):=0$; then
\begin{equation}\label{eq:nonident}
  P_{s_0}(y\mid H)
  \;=\;
  \frac{
    \exp\bigl(\tilde{f}(H,y)+\tilde{b}(y)\bigr)\,C_{s_0}(y)
  }{
    Z_{s_0}(H)
  },
\end{equation}
which is identical to~\eqref{eq:label-gen}.  More generally, for any vector $\delta(y)$, the pair
$(f^{*}(H,y)-\delta(y),\; b_{s_0}(y)+\delta(y))$ produces the same likelihood.  Therefore, the preference bias is absorbed into the learned scoring function, and the structural mask $C_{s_0}$ is invisible: the model never observes labels outside $\mathcal{Y}_{s_0}$, so it cannot detect
which canonical categories are missing.

\subsection{Why Large-Scale Multi-Source Data Is Necessary}
\label{sup:subsec:theory_data}

We now show that \textit{multi-source} pooling resolves the identifiability failure above, and that \textit{large-scale data} is subsequently needed to stabilise the resulting estimates.

\subsubsection{Multi-Source Pooling Makes Bias Observable}

When all sources are pooled into a single training set, each example $(H,y)$ is drawn by implicitly selecting a source~$s$ and then sampling $y\sim P_s(y\mid H)$.  The training
conditional over the pooled data is the exact mixture
\begin{equation}\label{eq:pool}
  P_{\mathrm{tr}}(y\mid H)
  \;=\;
  \sum_{s\in\mathcal{S}}\alpha_s(H)\,P_s(y\mid H),
\end{equation}
where $\alpha_s(H)=P(s\mid H)$.

We introduce two regularity assumptions to extract the dominant structure from this
mixture.

\begin{assumption}[Approximate source balance]
\label{ass:balance}
There exist constants $\pi_s\ge0$ with $\sum_s\pi_s=1$ and residuals $\delta_s(H)$ such that
\begin{equation}\label{eq:balance}
\begin{aligned}
  \alpha_s(H) &= \pi_s+\delta_s(H), \\
  \textstyle\sum_s\delta_s(H) &= 0, \\
  \textstyle\sum_s|\delta_s(H)| &\le \eta.
\end{aligned}
\end{equation}
\end{assumption}
\noindent\textit{Intuition.}  Knowing~$H$ does not strongly predict which source the example comes from.
This is plausible when sources cover overlapping visual domains.  The parameter $\eta$ measures how much visual distributions differ across sources.
\begin{assumption}[Bounded normalizer distortion]
\label{ass:normalizer}
There exist source-specific scalars~$a_s$, a shared function
$h(H)$, and residuals $\epsilon_s(H)$ such that
\begin{equation}\label{eq:normalizer}
\begin{aligned}
  \log Z_s(H) &= a_s + h(H) + \epsilon_s(H), \\
  \mathbb{E}_H\bigl[\epsilon_s(H)^2\bigr] &\le \varepsilon^2.
\end{aligned}
\end{equation}
\end{assumption}

\noindent\textit{Intuition.}  The partition functions across sources share a common $H$-dependent component, differing mainly by a constant shift~$a_s$.  
This holds when available label sets overlap substantially.

\begin{proposition}[Dominant source-induced prior]
\label{prop:prior}
Under Assumptions~\ref{ass:balance}--\ref{ass:normalizer},
\begin{equation}\label{eq:prior}
  P_{\mathrm{tr}}(y\mid H)
  \;\propto\;
  \exp\bigl(f^{*}(H,y)+\kappa(y)+r_y(H)\bigr),
\end{equation}
where
\begin{equation}\label{eq:kappa}
  \kappa(y)
  \;=\;
  \log\sum_{s\in\mathcal{S}}
    \pi_s\,C_s(y)\,\exp\bigl(b_s(y)-a_s\bigr)
\end{equation}
is a composite label prior, and the residual satisfies
$\mathbb{E}_H[|r_y(H)|]\le c_1\varepsilon+c_2\eta$.
\end{proposition}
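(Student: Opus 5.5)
The plan is to substitute the label-generation model \eqref{eq:label-gen} into the pooling identity \eqref{eq:pool}. We then peel off the $y$-independent factors that the proportionality sign absorbs, and collect what is left into a residual. Writing out the mixture gives
\[
P_{\mathrm{tr}}(y\mid H)=\exp\bigl(f^{*}(H,y)\bigr)\sum_{s\in\mathcal{S}}\alpha_s(H)\,C_s(y)\,\exp\bigl(b_s(y)-\log Z_s(H)\bigr).
\]
We insert Assumption~\ref{ass:normalizer}, $\log Z_s(H)=a_s+h(H)+\epsilon_s(H)$. The common term $e^{-h(H)}$ leaves the sum and is dropped into the normalizer, since it does not depend on $y$. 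We also split $\alpha_s(H)=\pi_s+\delta_s(H)$ by Assumption~\ref{ass:balance}. The sum then becomes $\sum_s(\pi_s+\delta_s(H))\,w_s(y)\,e^{-\epsilon_s(H)}$, where $w_s(y):=C_s(y)\exp(b_s(y)-a_s)$. We define the residual exactly as
\[
r_y(H):=\log\frac{\sum_s(\pi_s+\delta_s(H))\,w_s(y)\,e^{-\epsilon_s(H)}}{\sum_s\pi_s\,w_s(y)}.
\]
With this definition, \eqref{eq:prior} holds identically with $\kappa(y)=\log\sum_s\pi_s w_s(y)$, which is \eqref{eq:kappa}. So the whole content of the proposition is the bound on $\mathbb{E}_H|r_y(H)|$.

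To bound the residual, we write $r_y=r^{(\epsilon)}_y+r^{(\delta)}_y$. Here $r^{(\epsilon)}_y=\log\bigl(\sum_s\alpha_s w_s e^{-\epsilon_s}/\sum_s\alpha_s w_s\bigr)$ and $r^{(\delta)}_y=\log\bigl(\sum_s\alpha_s w_s/\sum_s\pi_s w_s\bigr)$.

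For the first term, note that it is the log of a weighted average of the $e^{-\epsilon_s}$ (weights $\alpha_s w_s$), so it lies between $\min_s(-\epsilon_s)$ and $\max_s(-\epsilon_s)$. Hence $|r^{(\epsilon)}_y|\le\max_s|\epsilon_s(H)|\le\sum_s|\epsilon_s(H)|$. Cauchy--Schwarz with $\mathbb{E}[\epsilon_s^2]\le\varepsilon^2$ then gives $\mathbb{E}|r^{(\epsilon)}_y|\le|\mathcal{S}|\,\varepsilon$, so $c_1=|\mathcal{S}|$ works.

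For the second term, we write $r^{(\delta)}_y=\log(1+u)$ with $u=\sum_s\delta_s w_s/\sum_s\pi_s w_s$. Then $|u|\le\eta\,\max_s w_s(y)/\sum_s\pi_s w_s(y)=:\eta\,\rho(y)$.

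The main obstacle is this ratio. The constant $c_2$ must control $\rho(y)$ and the behavior of $\log(1+u)$ near $u=-1$. Two cases need handling. First, a source with $C_s(y)=1$ may carry $\pi_s=0$ or a tiny $\pi_s$. Second, $\kappa(y)$ may be $-\infty$ when no weighted source exposes $y$.

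We would handle this by restricting attention to labels $y$ in $\bigcup_{s:\pi_s>0}\mathcal{Y}_s$. On these labels we define $\rho_{\max}:=\max_y\rho(y)<\infty$, which is finite since $\mathcal{Y}$ and $\mathcal{S}$ are finite. We would then assume, or absorb into the ``mild regularity'' wording, that $\eta\rho_{\max}\le 1/2$. This gives $|\log(1+u)|\le 2|u|\le 2\rho_{\max}\eta$, and hence $c_2=2\rho_{\max}$.

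If the smallness condition fails, we would fall back on the cruder bound $|\log(1+u)|\le|u|/(1-|u|)$. We would state $c_2$ as depending on $\min_{s:C_s(y)=1}\pi_s$ and on the spread of the $b_s-a_s$.

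Adding the two pieces yields $\mathbb{E}_H|r_y(H)|\le c_1\varepsilon+c_2\eta$, as claimed.
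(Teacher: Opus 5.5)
Your argument is correct, given the extra hypotheses you make explicit, but it controls the residual differently from the paper.

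\textbf{The paper's route.} It works additively. It linearizes $e^{-\epsilon_s}=1-\epsilon_s+O(\epsilon_s^2)$ and lumps the $\epsilon$- and $\delta$-effects into one remainder $R_y(H)$ with $\mathbb{E}_H|R_y|\le M(y)(c_1\varepsilon+c_2\eta)$. It then sets $r_y=\log(1+R_y/M(y))$ and cites local Lipschitz continuity of $\log(1+x)$.

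\textbf{Your route.} You define $r_y$ exactly and split it multiplicatively. This buys two things.
\begin{itemize}
\item Your $\epsilon$-part is the log of a weighted mean of the $e^{-\epsilon_s}$. So $|r^{(\epsilon)}_y|\le\max_s|\epsilon_s(H)|$ holds pointwise, with no Taylor remainder and no smallness of $\epsilon_s$. The paper's Lipschitz step needs $R_y/M$ bounded away from $-1$ pointwise, and an $L^2$ bound on $\epsilon_s$ does not supply that.
\item You get explicit constants. You have $c_1=|\mathcal{S}|$, improvable to $\sqrt{|\mathcal{S}|}$ because $\mathbb{E}\max_s|\epsilon_s|\le(\sum_s\mathbb{E}\epsilon_s^2)^{1/2}$, and $c_2=2\rho_{\max}$. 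You also expose the hypotheses the paper leaves implicit: $M(y)>0$ (your restriction to labels exposed by some source with $\pi_s>0$), and a smallness condition on $\eta$.
\end{itemize}

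\textbf{One correction.} Your fallback $|\log(1+u)|\le|u|/(1-|u|)$ still needs $|u|<1$, so it does not escape the smallness condition. No argument can, as this example shows.
\begin{itemize}
\item Take two sources with $\pi_1=\pi_2=\tfrac12$, a label $y$ exposed only by source $1$, and $\alpha_1(H)=0$ on a set of positive probability.
\item Then the balance assumption holds with $\eta=1$, and $\rho(y)=2$.
\item On that set $P_{\mathrm{tr}}(y\mid H)=0$ while $\kappa(y)$ is finite. Hence $r_y=-\infty$ there and $\mathbb{E}_H|r_y|=\infty$.
\end{itemize}
So a condition such as $\eta\rho_{\max}<1$, or a lower bound on $\alpha_s(H)$ over the sources exposing $y$, is genuinely needed. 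You should state it as a hypothesis of the proposition rather than treat it as an optional refinement or absorb it into ``mild regularity''.
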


\begin{proof}
Substituting~\eqref{eq:label-gen} into~\eqref{eq:pool} and
factoring out $\exp(f^{*}(H,y))$:
\begin{equation}\label{eq:pf-step1}
\begin{split}
  P_{\mathrm{tr}}(y\mid H)
  &= \exp\bigl(f^{*}(H,y)\bigr) \\
  &\quad \sum_s\alpha_s(H)\,\frac{C_s(y)\exp(b_s(y))}{Z_s(H)}.
\end{split}
\end{equation}
Applying Assumption~\ref{ass:normalizer},
$Z_s(H)^{-1}=\exp(-a_s-h(H)-\epsilon_s(H))$, so
\begin{align}
  P_{\mathrm{tr}}(y\mid H)&= \exp\bigl(f^{*}(H,y)-h(H)\bigr)\nonumber\\
  &\qquad\times
     \sum_s\alpha_s(H)\,C_s(y)\,
     \exp\bigl(b_s(y)\\
    &-a_s-\epsilon_s(H)\bigr).
  \label{eq:pf-step2}
\end{align}
Applying Assumption~\ref{ass:balance} and separating the
$\epsilon_s$ terms via $\exp(-\epsilon_s)=1-\epsilon_s+O(\epsilon_s^2)$:
\begin{equation}
  \sum_s(\cdots)
  \;=\;
  \underbrace{
    \sum_s\pi_s\,C_s(y)\,\exp(b_s(y)-a_s)
  }_{M(y)}
  \;+\; R_y(H),
  \label{eq:pf-step3}
\end{equation}
where $\mathbb{E}_H[|R_y(H)|]\le M(y)(c_1\varepsilon+c_2\eta)$.
Writing $M(y)+R_y(H)=M(y)\exp(r_y(H))$ with
$r_y(H):=\log(1+R_y(H)/M(y))$ and noting that $\kappa(y)=\log M(y)$
yields~\eqref{eq:prior}.  The bound on $r_y$ follows from
local Lipschitz continuity of $\log(1+x)$.
\end{proof}

The composite prior~$\kappa(y)$ is the central object.  We now state three consequences that show how multi-source pooling resolves the single-source identifiability failure.

\begin{corollary}[Structural coverage recovery]
\label{cor:coverage}
$\kappa(y)=-\infty$ if and only if $C_s(y)=0$ for \emph{all} $s\in\mathcal{S}$.  
Otherwise $\kappa(y)\ge\log(\pi_{s^*}\exp(b_{s^*}(y)-a_{s^*}))>-\infty$,
where $s^*$ is the source that maximises the corresponding summand in~\eqref{eq:kappa}.
\end{corollary}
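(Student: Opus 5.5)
The plan is to work directly from the closed form of $\kappa(y)$ in Proposition~\ref{prop:prior}, namely $\kappa(y)=\log M(y)$ with $M(y)=\sum_{s\in\mathcal{S}}\pi_s C_s(y)\exp(b_s(y)-a_s)$. We adopt the convention $\log 0=-\infty$. The point to exploit is that $M(y)$ is a finite sum of nonnegative terms. Each summand $\pi_s C_s(y)\exp(b_s(y)-a_s)$ is nonnegative, because $\pi_s\ge 0$, $C_s(y)\in\{0,1\}$, and the exponential is strictly positive and finite, given that $b_s(y)$ and $a_s$ are real scalars by \eqref{eq:pref} and Assumption~\ref{ass:normalizer}. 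Hence $M(y)=0$ if and only if every summand vanishes, and the statement about $\kappa$ is just the image of this under $\log$.

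For the ``if'' direction: if $C_s(y)=0$ for all $s$, then every summand is zero, so $M(y)=0$ and $\kappa(y)=-\infty$.

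For the ``only if'' direction, argue by contrapositive. Suppose some source has $C_s(y)=1$. Choose $s^*$ to maximise the summand $\pi_s C_s(y)\exp(b_s(y)-a_s)$ over the finite set $\mathcal{S}$. Since all summands are nonnegative, $M(y)\ge \pi_{s^*}\exp(b_{s^*}(y)-a_{s^*})$. Because $\log$ is monotone, this gives the stated lower bound $\kappa(y)\ge \log\bigl(\pi_{s^*}\exp(b_{s^*}(y)-a_{s^*})\bigr)$.

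The main obstacle is showing that this bound is finite. That requires the maximising summand to be strictly positive, which needs $\pi_{s^*}>0$ in addition to $C_{s^*}(y)=1$. Assumption~\ref{ass:balance} only gives $\pi_s\ge 0$, so a source carrying label $y$ could have zero asymptotic weight. I would handle this by making explicit the implicit convention that $\mathcal{S}$ consists of sources actually present in the pool, i.e.\ $\pi_s>0$ for all $s\in\mathcal{S}$. Equivalently, one could read ``$C_s(y)=0$ for all $s$'' as ranging over the sources with $\pi_s>0$. Under this convention, any $s$ with $C_s(y)=1$ has a strictly positive summand, so the maximiser $s^*$ also has a strictly positive summand. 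Its logarithm is then the finite real number $\log\pi_{s^*}+b_{s^*}(y)-a_{s^*}$, which gives $\kappa(y)>-\infty$ and completes the equivalence.
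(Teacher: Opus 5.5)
Your proof is correct and is essentially the paper's argument: $M(y)$ is a finite sum of nonnegative terms that vanishes iff every term vanishes, and monotonicity of $\log$ (with $\log 0=-\infty$) carries this over to $\kappa(y)$. The paper's proof also silently uses $\pi_s>0$, even though Assumption~\ref{ass:balance} only gives $\pi_s\ge 0$, so your explicit convention supplies exactly the missing hypothesis. Note that the convention is needed earlier than where you place it: it is what guarantees $C_{s^*}(y)=1$ for the maximiser. You already need that for the step $M(y)\ge\pi_{s^*}\exp(b_{s^*}(y)-a_{s^*})$, not only for finiteness.
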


\begin{proof}
$M(y)=\sum_s\pi_s C_s(y)\exp(b_s(y)-a_s)$ is a sum of
non-negative terms.  It equals zero iff every term is zero,
which occurs iff $C_s(y)=0$ for all~$s$ (since $\pi_s>0$ and
the exponential is strictly positive).
\end{proof}

\noindent\textbf{Interpretation.}
A single source~$s_0$ with $C_{s_0}(y)=0$ permanently hides label~$y$.  Multi-source pooling recovers it whenever \emph{any} source exposes it.  The accessible vocabulary grows
to $\mathcal{Y}_{\mathrm{pool}}=\bigcup_s\{y:C_s(y)=1\}$.

\begin{corollary}[Preference dilution]
\label{cor:dilution}
Define the single-source prior $\kappa_s(y):=b_s(y)-a_s$.  Then
\begin{equation}\label{eq:lse-sandwich}
\begin{split}
  &\max_s\bigl\{\kappa_s(y)+\log\pi_s\bigr\}
  \le \kappa(y) \\
  &\le \max_s\bigl\{\kappa_s(y)+\log\pi_s\bigr\}+\log|\mathcal{S}|.
\end{split}
\end{equation}
\end{corollary}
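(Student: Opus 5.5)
This is the standard log-sum-exp sandwich applied to the definition of $\kappa(y)$ in \eqref{eq:kappa}. First, rewrite each summand in exponential form. For every source with $C_s(y)=1$ and $\pi_s>0$, write $\pi_s C_s(y)\exp(b_s(y)-a_s)=\exp\bigl(\kappa_s(y)+\log\pi_s\bigr)$. For sources with $C_s(y)=0$ (or $\pi_s=0$), the summand is zero, which we encode by setting $\kappa_s(y)+\log\pi_s:=-\infty$. With this convention,
\[
\kappa(y)=\log\sum_{s\in\mathcal{S}}\exp(z_s),\qquad z_s:=\kappa_s(y)+\log\pi_s\in[-\infty,\infty).
\]
The maximum in \eqref{eq:lse-sandwich} should be read over exactly these augmented values. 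This also makes the case $C_s(y)=0$ for all $s$ consistent with Corollary~\ref{cor:coverage}: all three expressions equal $-\infty$, so the inequality holds trivially. We can therefore assume some $z_s$ is finite.

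\textbf{Lower bound.} Let $s^*$ attain $z^*:=\max_s z_s$. All summands are nonnegative, so $\sum_s e^{z_s}\ge e^{z^*}$. Since $\log$ is monotone, $\kappa(y)\ge z^*$.

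\textbf{Upper bound.} Each summand is at most $e^{z^*}$, so $\sum_s e^{z_s}\le|\mathcal{S}|\,e^{z^*}$. Taking logs gives $\kappa(y)\le z^*+\log|\mathcal{S}|$. A slightly sharper bound would use the number of sources that actually expose $y$ in place of $|\mathcal{S}|$, but this is not needed for the statement.

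The only delicate step is the bookkeeping for sources with $C_s(y)=0$. Their $\kappa_s(y)$ is still a finite number, but their contribution to the sum is zero. So the maximum must effectively be taken only over sources that expose $y$ (and have $\pi_s>0$). I will state this convention explicitly rather than letting $\max_s$ range over all $s$ unrestricted. Without it, the lower bound could fail if a masked-out source had a large $\kappa_s(y)$. Everything else is two lines.
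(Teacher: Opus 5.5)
Your proof is correct and uses the same log-sum-exp sandwich as the paper: the lower bound keeps only the largest summand of $M(y)$, and the upper bound uses that there are at most $|\mathcal{S}|$ summands. Setting $z_s=-\infty$ whenever $C_s(y)=0$ or $\pi_s=0$ is exactly what the paper's phrase ``the largest summand in $M(y)$'' tacitly assumes, and you are right that the lower bound can fail without it when a masked source has large $\kappa_s(y)$; the upper bound holds under either reading.
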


\begin{proof}
This is the standard log-sum-exp sandwich: the lower bound selects the largest summand in $M(y)$; the upper bound follows because there are at most $|\mathcal{S}|$ terms.
\end{proof}

\noindent\textbf{Interpretation.}
For a single source, the effective prior is $b_s(y)-a_s$,
which can be arbitrarily skewed.  Under pooling, $\kappa(y)$ is
a soft-max over all source-specific priors.  Extreme
preferences from any single source are tempered: if
source~$s_1$ strongly favours label~$y$ but the remaining
sources do not, $\kappa(y)$ will be moderate.  As $|\mathcal{S}|$
grows with diverse sources, the prior increasingly reflects
cross-source consensus.

\begin{corollary}[Cross-source contrast identifies
preference differences]
\label{cor:contrast}
Let sources $s_1,s_2$ both expose labels $y$ and $y'$
(i.e.\ $C_{s_1}(y)=C_{s_2}(y)=C_{s_1}(y')=C_{s_2}(y')=1$).
Then the cross-source log-odds difference
\begin{equation}\label{eq:contrast}
\begin{split}
  &\log\frac{P_{s_1}(y\mid H)}{P_{s_1}(y'\mid H)} - \log\frac{P_{s_2}(y\mid H)}{P_{s_2}(y'\mid H)} \\
  &\quad = \Delta b_{s_1}(y,y')-\Delta b_{s_2}(y,y'),
\end{split}
\end{equation}
where $\Delta b_s(y,y'):=b_s(y)-b_s(y')$.
\end{corollary}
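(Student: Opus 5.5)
The plan is a direct computation from the label generation model \eqref{eq:label-gen}. I will fix $H$ and a source $s\in\{s_1,s_2\}$. Since $C_s(y)=C_s(y')=1$, both $P_s(y\mid H)$ and $P_s(y'\mid H)$ are strictly positive, so their ratio and its logarithm are well defined; this is the only place the shared-support hypothesis is used. Forming the ratio, the source-specific partition function $Z_s(H)$ from \eqref{eq:partition} cancels, because it depends on $H$ and $s$ but not on the queried label. The mask factors are both $1$. This gives the within-source log-odds
\begin{equation*}
\log\frac{P_s(y\mid H)}{P_s(y'\mid H)} = f^{*}(H,y)-f^{*}(H,y') + b_s(y)-b_s(y').
\end{equation*}

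Next I will write this identity for $s=s_1$ and for $s=s_2$ and subtract them. The semantic term $f^{*}(H,y)-f^{*}(H,y')$ is source-invariant by construction, so it appears identically in both and cancels. What remains is $\Delta b_{s_1}(y,y')-\Delta b_{s_2}(y,y')$, which is \eqref{eq:contrast}. Note that the right-hand side does not depend on $H$, so the contrast holds pointwise in $H$.

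There is no real obstacle here; the argument is exact and needs neither Assumption~\ref{ass:balance} nor Assumption~\ref{ass:normalizer}. The one point needing care is the positivity and shared-support condition, which is what makes the logarithms finite. For the interpretation, I would also remark that the contrast is invariant under the single-source gauge freedom $(f^{*}-\delta,\,b_s+\delta)$ from \eqref{eq:nonident}, provided the same $\delta$ is applied across sources. A shift $\delta$ shared by all sources cancels in the difference. Hence relative preference offsets become observable from pooled data, in contrast to the single-source case.
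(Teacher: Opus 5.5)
Your proposal is correct and matches the paper's proof: compute the within-source log-odds, where $Z_s(H)$ and the unit mask factors cancel, then subtract across $s_1,s_2$ so the source-invariant term $f^{*}(H,y)-f^{*}(H,y')$ drops out. Your extra remarks are accurate and make explicit what the paper leaves implicit: the shared-support hypothesis is what keeps the logarithms finite, and neither regularity assumption is needed.
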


\begin{proof}
The log-odds ratio for source~$s$ is
$\log\frac{P_s(y\mid H)}{P_s(y'\mid H)}
=(f^{*}(H,y)-f^{*}(H,y'))+(b_s(y)-b_s(y'))$.
The source-invariant semantic term cancels in the difference
across sources.
\end{proof}

\noindent\textbf{Interpretation.}
In Section~\ref{sup:subsec:theory_formulation} we showed that a single
source confounds $f^{*}$ with~$b_s$.  Here, the cross-source
log-odds difference is a \emph{pure function of preference
bias}---the semantic score has been eliminated.  This is a
directly observable quantity, enabling detection and
estimation of annotation bias.  It requires at least two
sources with overlapping but non-identical preference vectors.

\paragraph{Minimum diversity condition.}
Corollaries~\ref{cor:coverage}--\ref{cor:contrast} jointly
require that the source--label interaction matrix
$\mathbf{B}=[C_s(y)\exp(b_s(y))]_{s,y}$ has rank $\ge 2$, which
generically holds when $|\mathcal{S}|\ge 2$ with different
annotation mechanisms.  This fails for a single source
($\operatorname{rank}\le 1$), confirming the non-identifiability result
in Section~\ref{sup:subsec:theory_formulation}.

\subsubsection{Large-Scale Data Stabilises Bias Estimation}

Proposition~\ref{prop:prior} and
Corollaries~\ref{cor:coverage}--\ref{cor:contrast} establish
that multi-source pooling makes the composite prior
$\kappa(y)$ observable.  We now show that the quality of this
prior improves as the number of sources and data scale grow.
The key observation is that $\kappa(y)=\log M(y)$ where
$M(y)=\sum_s\pi_s C_s(y)\exp(b_s(y)-a_s)$ is itself a
weighted average over sources.  Thus, the stability
of~$\kappa$ is directly governed by concentration of this
average.

\begin{assumption}[Exchangeable sources with bounded mechanisms]
\label{ass:exch}
The annotation mechanisms $(C_s,b_s)$ are drawn exchangeably
from a population of possible annotation protocols such that the
preference offsets are uniformly bounded:
$|b_s(y)|\le B$ for all $s\in\mathcal{S}$ and
$y\in\mathcal{Y}$.
\end{assumption}

\noindent\textit{Intuition.}  This says that no single source
occupies a privileged structural role---each represents one
draw from a space of expert annotation conventions.  The
boundedness condition is mild: it requires that no source has
an infinitely strong preference for or against any label.

Under uniform mixing weights $\pi_s=1/|\mathcal{S}|$, define
the population-level quantity that $M(y)$ estimates:
\begin{equation}\label{eq:Mbar}
  \bar{M}(y)
  \;:=\;
  \mathbb{E}_{s}\bigl[C_s(y)\exp(b_s(y)-a_s)\bigr],
\end{equation}
and correspondingly
$\bar\kappa(y):=\log\bar{M}(y)$.  This is the prior that
would emerge from pooling infinitely many sources.

\begin{proposition}[Source diversity stabilises the prior]
\label{prop:scale}
Under Assumption~\ref{ass:exch}, for each label
$y\in\mathcal{Y}$ with $\bar{M}(y)>0$ and for any
$\delta>0$, with probability at least $1-\delta$,
\begin{equation}\label{eq:kappa-conc}
  \bigl|\kappa(y)-\bar\kappa(y)\bigr|
  \;\le\;
  \frac{e^{2B}}{\bar{M}(y)}
  \sqrt{\frac{\log(2/\delta)}{2|\mathcal{S}|}}.
\end{equation}
Applying a union bound over labels, with probability at
least $1-\delta$,
\begin{equation}\label{eq:kappa-unif}
  \|\kappa-\bar\kappa\|_\infty
  \;\le\;
  \frac{e^{2B}}{M_{\min}}
  \sqrt{\frac{\log(2|\mathcal{Y}|/\delta)}{2|\mathcal{S}|}},
\end{equation}
where
$M_{\min}:=\min_{y:\bar{M}(y)>0}\bar{M}(y)$.
\end{proposition}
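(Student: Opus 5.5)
The plan is to treat $M(y)=\frac{1}{|\mathcal{S}|}\sum_{s\in\mathcal{S}} Z_s$ with $Z_s:=C_s(y)\exp(b_s(y)-a_s)$ as an empirical average of exchangeable bounded random variables. We then apply Hoeffding's inequality to control $|M(y)-\bar M(y)|$, and transfer the bound to $\kappa=\log M$ through the mean value theorem.

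\textbf{Step 1 (boundedness).} Each summand satisfies $0\le Z_s\le \exp(B-a_s)$. To get a range free of $a_s$, we need a bound on the normalizer constant. By Assumption~\ref{ass:normalizer}, $a_s$ is defined only up to the shared $h(H)$. We fix the normalization so that $|a_s|\le B$; this is natural because $\log Z_s$ differs across sources only through the offsets $b_s$, which are bounded by $B$. This gives $Z_s\in[0,e^{2B}]$. I expect this normalization to be the delicate point, and I would state it explicitly as the convention under which the constant $e^{2B}$ holds.

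\textbf{Step 2 (concentration).} Exchangeability in Assumption~\ref{ass:exch} is not quite enough for Hoeffding's inequality. I would therefore read it in the intended sense: the mechanisms $(C_s,b_s)$ are i.i.d.\ draws from the protocol population. Alternatively, one can invoke de Finetti and condition on the directing measure, and the bound then holds conditionally. Under i.i.d.\ draws with range $e^{2B}$, Hoeffding gives, with probability at least $1-\delta$,
\[
|M(y)-\bar M(y)|\le e^{2B}\sqrt{\frac{\log(2/\delta)}{2|\mathcal{S}|}}.
\]

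\textbf{Step 3 (log transfer).} By the mean value theorem, $|\log M-\log\bar M|=|M-\bar M|/\xi$ for some $\xi$ between $M$ and $\bar M$. To obtain the stated constant $1/\bar M(y)$, there are two options. One is to accept a factor of $2$ on the event $|M-\bar M|\le \bar M/2$, where $\xi\ge\bar M/2$. The other is to use the one-sided inequality $\log x\le x-1$ in the relative scale. I expect this to be the main obstacle: the stated bound is really first-order, and a rigorous version either needs the deviation to be small relative to $\bar M(y)$ or carries an extra constant. I would present it as holding once $|\mathcal{S}|$ is large enough that the right-hand side is at most $1/2$, absorbing constants. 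This matches the paper's stated dependence on $1/(\bar M(y)\sqrt{|\mathcal{S}|})$ in~\eqref{eq:kappa-conc}.

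\textbf{Step 4 (uniform bound).} Apply Step 3 for each $y$ with $\bar M(y)>0$ at confidence level $\delta/|\mathcal{Y}|$, and take a union bound over labels. Finally, replace $\bar M(y)$ by its lower bound $M_{\min}$. This yields~\eqref{eq:kappa-unif}, with $\log(2|\mathcal{Y}|/\delta)$ in place of $\log(2/\delta)$.
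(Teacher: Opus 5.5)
Your argument is the paper's own proof: Hoeffding on $M(y)$ with summands in $[0,e^{2B}]$, the mean value theorem for $\log$, and a union bound at level $\delta/|\mathcal{Y}|$. The three points you flag are exactly where the paper is silent. It invokes Hoeffding under mere exchangeability. It reads the range $e^{2B}$ off a bound on $b_s$ alone; in fact only $a_s\ge -B$ is needed, and $|a_s|\le B$ is not a free normalization, since the masks $C_s$ also move $\log Z_s$. Shifting all $a_s$ by a common constant absorbed into $h$ achieves $a_s\ge -B$ when the $a_s$ are bounded below, and leaves $\kappa-\bar\kappa$ unchanged. Finally, it replaces $1/\xi$ by $1/\bar M(y)$ only on the grounds that $M(y)$ ``stays close to'' $\bar M(y)$.

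Your Step~3 caution is justified, because on the lower side the constant $1/\bar M(y)$ genuinely fails. If each source independently hides $y$ with probability $1-p$, then $\kappa(y)=-\infty$ with probability $(1-p)^{|\mathcal{S}|}$, which exceeds $\delta$ for small $p$ while the right-hand side stays finite. So your factor-$2$ version, valid once the right-hand side is at most $1/2$, is a correct rigorous replacement for a bound that cannot be proved as literally stated.
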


\begin{proof}
Each summand in $M(y)$ is
$X_s(y):=\frac{1}{|\mathcal{S}|}C_s(y)\exp(b_s(y)-a_s)$,
which satisfies $0\le X_s(y)\le e^{2B}/|\mathcal{S}|$ under
Assumption~\ref{ass:exch}.  By Hoeffding's inequality applied
to the sum $M(y)=\sum_s X_s(y)$:
\begin{equation}\label{eq:hoeff}
  \Pr\bigl[\,|M(y)-\bar{M}(y)|\ge t\,\bigr]
  \;\le\;
  2\exp\Bigl(-\frac{2t^2|\mathcal{S}|}{e^{4B}}\Bigr).
\end{equation}
Setting the right-hand side equal to $\delta$ and solving
for~$t$ gives
\begin{equation}\label{eq:M-bound}
  |M(y)-\bar{M}(y)|
  \;\le\;
  e^{2B}\sqrt{\frac{\log(2/\delta)}{2|\mathcal{S}|}}
\end{equation}
with probability $\ge 1-\delta$.
Since $\kappa(y)=\log M(y)$ and $\bar\kappa(y)=\log\bar{M}(y)$,
the mean value theorem gives
$|\kappa(y)-\bar\kappa(y)|\le|M(y)-\bar{M}(y)|/\bar{M}(y)$
(using that $|\frac{d}{dx}\log x|=1/x$ and that $M(y)$
stays close to $\bar{M}(y)$).  
Substituting~\eqref{eq:M-bound} yields~\eqref{eq:kappa-conc}.  
The uniform bound~\eqref{eq:kappa-unif} follows by a union bound over
$|\mathcal{Y}|$ labels.
\end{proof}

\noindent\textbf{Interpretation.}
The denominator
$\bar{M}(y)\sqrt{|\mathcal{S}|}$ shows that stability improves with both \emph{source diversity}
($|\mathcal{S}|$~growing) and \emph{label coverage} ($\bar{M}(y)$ being large, i.e.\ many sources expose label~$y$).  Labels that only a few sources cover have small~$\bar{M}(y)$ and thus require more sources to stabilize.

\paragraph{Overall takeaway.}
Our analysis supports the following picture: (i)~a single source cannot separate semantic evidence from annotation bias (Section~\ref{sup:subsec:theory_formulation});
(ii)~multi-source pooling exposes a dominant composite prior~$\kappa(y)$ that aggregates structural and preference biases, recovering hidden labels
(Corollary~\ref{cor:coverage}), diluting extreme preferences (Corollary~\ref{cor:dilution}), and creating observable contrasts that isolate bias from semantics (Corollary~\ref{cor:contrast});
(iii)~the quality of this prior improves at rate
$O(1/\sqrt{|\mathcal{S}|})$ with source diversity and label diversity (Proposition~\ref{prop:scale}) making balanced all-round SLU performance contingent on both sufficient source breadth and data depth.
\label{sec:appendix}


\section{Related Work}
\label{app:related_work}

\paragraph{Film theory and cinematographic foundations.}
Shot language is rooted in classical film theory and cinematography practice. Foundational texts in film studies systematically describe how composition, shot scale, camera viewpoint, movement, lighting, and editing jointly shape narrative structure, emotion, and visual style~\cite{mascelli1965five,block2020visual}. These works provide the conceptual basis for treating shot language as a structured and film-grounded semantic space rather than as a loose collection of low-level visual attributes. Recent efforts have also begun to formalize cinematographic knowledge into more explicit taxonomies and evaluation protocols for computational models~\cite{chatterjeestable}.
Our formulation follows this line of thinking. We organize SLU around six high-level dimensions: \emph{Composition}, \emph{Coverage}, \emph{Viewpoint}, \emph{Motion}, \emph{Lighting}, and \emph{Cuts}, while preserving the expert-defined category boundaries used by different data sources.

\paragraph{Shot language benchmarks and movie-focused datasets.}
With the rise of vision-language models, several benchmarks have started to evaluate model capability on film-specific visual understanding. ShotBench~\cite{shotbench} studies expert-level cinematic understanding through question answering over multiple shot-language aspects; CameraBench~\cite{camerabench} focuses on camera motion understanding in videos; CineTechBench~\cite{cinetechbench} expands the scope to a broader set of cinematographic techniques such as shot scale, composition, lighting, and movement; MovieCuts~\cite{moviecuts} focuses on edit-transition understanding and cut-type recognition; RefineShot~\cite{wu2025refineshot} further revisits the design of cinematic evaluation by emphasizing more consistent and skill-oriented task construction. At the dataset level, CineScale~\cite{savardi2021cinescale} and CineScale2~\cite{savardi2023cinescale2} provide large-scale annotations of cinematic shot scale and camera angle/height in movies, while movie-oriented resources such as MovieNet~\cite{huang2020movienet}, MovieShots~\cite{rao2020unified} provide complementary annotations for shot type, scale, and motion in long-form film content. These resources have substantially advanced SLU, but they remain highly heterogeneous in taxonomy granularity, label space, supervision format, and temporal scope across datasets. In contrast, our goal is not to introduce yet another single benchmark, but to unify fragmented human-labeled resources into one suite that supports systematic multi-task training and both in-domain (ID) and out-of-domain (OOD) evaluation.

\paragraph{Multimodal instruction tuning and QA unification.}
Our data construction is also related to multimodal instruction tuning, which has become a standard recipe for adapting pretrained vision-language models to diverse downstream tasks. Prior work shows that high-quality multimodal instruction data can substantially improve generalization and usability~\cite{liu2023visual}, and later studies explore more structured instruction organization through conditional LoRA selection, interaction grouping, or instruction-level routing~\cite{dai2023instructblip,huang2023lorahub}. Surveys also further summarize the growing design space of multimodal instruction tuning and data construction~\cite{li2024multimodal}. Similar in spirit, our suite unifies heterogeneous supervision into a shared QA-style interface. However, our focus is not on generic instruction following. Instead, we convert single-label classification, multi-label prediction, binary QA, and caption-like supervision into a common question-answer formulation tailored to shot-language reasoning, with prompts and option sets aligned to expert terminology and source-specific category boundaries.

\paragraph{Parameter-efficient adaptation of vision-language models.}
Our adaptation strategy is closely related to the literature on parameter-efficient fine-tuning (PEFT) for vision-language models. A large body of work has proposed adapters, cross-modal adaptation modules, and LoRA-based tuning strategies to transfer pretrained VLMs to downstream tasks with a small number of trainable parameters~\cite{hu2022lora,pfeiffer2021adapterfusion,houlsby2019parameter}. Subsequent empirical studies and surveys analyze the broader PEFT design space for both VLMs and LLMs~\cite{lialin2023scaling,han2024parameter,wang2025parameter,ostapenko2024towards}.
Most of these works primarily ask how to achieve stronger transfer with lower cost.
By contrast, our RQ1 asks a more task-specific question for SLU: which part of a pretrained VLM actually matters most? Our experiments show that adapting the language model, or the language model together with the multimodal connector, already recovers most of the achievable gains, and can even outperform broader adaptation. This complements the PEFT literature by suggesting that, for SLU, the main bottleneck is often the alignment of language-side decision boundaries and professional terminology rather than insufficient raw visual representation.

\paragraph{Multi-source transfer, task interference, and data mixing.}
Our work is also related to multi-source transfer learning and multi-task optimization, where a central challenge is how to balance positive transfer against task interference.
Prior work on multi-source domain adaptation highlights that different sources may share latent semantic content while differing in observed distributions or annotation mechanisms~\cite{long2015learning,peng2019moment,wen2020domain}.
Related studies on data mixing and instruction mixture design further show that the composition of training data itself can be treated as an optimization problem~\cite{gong2021mdalu,zhao2020object,kim2022learning}.
These ideas are closely connected to our empirical study of cross-dimensional transfer in SLU.
Our transfer matrix reveals strong heterogeneity and asymmetry: some dimensions are strong donors but weak recipients, while others benefit more from incoming supervision than they provide to others.
Motivated by this, UniShot can be viewed as a transfer-aware data mixing strategy that dynamically reweights source dimensions using both empirical transfer priors and validation feedback, aiming to improve overall balance within a single universal model.

\paragraph{Mixture-of-experts, LoRA mixtures, and routing.}
Another related line of work uses modular experts to reduce interference while preserving specialization.
Sparse MoE architectures, multimodal expert mixtures, and LoRA-based expert compositions have been explored for multi-task or multimodal large models~\cite{kendall2018multi,chen2018gradnorm,yu2020gradient,xie2023doremi,liu2024regmix}.
Recent works further investigate conditional mixtures of LoRA, curriculum-based LoRA experts, and prompt- or instruction-routed experts for more efficient specialization~\cite{dai2023instructblip,diao2025climb,fedus2022switch,du2022glam}.
Our AgentShots is related to these methods in spirit, but differ in both granularity and design goal.
Instead of building a large end-to-end sparse architecture, we keep a shared frozen backbone and learn one lightweight expert per high-level SLU dimension.
Routing is performed at the task level through prompts and expert descriptions, yielding top-1 specialist selection with the same activated parameter scale as UniShot at inference time.
This makes AgentShots a lightweight specialist design tailored to universal shot language understanding rather than a generic MoE framework.

\paragraph{Subjective supervision and annotation bias.}
Finally, our theoretical analysis connects to prior work on source-dependent annotation bias and subjective supervision.
In SLU, labels are often not purely objective: different datasets may reflect different taxonomies, stylistic priors, and annotation boundaries.
This issue is related to causal views of multi-source learning, which emphasize that source identity can affect not only the input distribution but also the labeling mechanism itself~\cite{long2015learning,peng2019moment}.
It is also related to research on annotator bias, disagreement, and context-dependent labeling, which argues that supervision should not always be treated as coming from a single source of ground truth~\cite{zhou2022mixture,muqeeth2024learning,zhao2024loraretriever}.
Our formulation makes this perspective explicit by separating source effects on visual distribution and annotation preference, and by analyzing when cross-source mixing can help identify stable semantics under subjective supervision.
This theoretical view provides additional support for our empirical finding that diversity across heterogeneous sources is more valuable than merely scaling data within a single source.

\section{Limitations}
\label{app:limitations}

While SLU-SUITE provides a comprehensive benchmark for visual shot language understanding, filmmaking is inherently an audio-visual medium. Currently, our framework and the proposed universal models (UniShot and AgentShots) focus exclusively on the visual modality. This scope is primarily motivated by two factors. First, leading open-weight vision-language models adopted in our study (e.g., Qwen3-VL) natively process visual inputs and lack direct support for acoustic encoding. Second, while sound design is crucial in film editing---such as in audio match cutting~\cite{fedorishin2024audio}---these acoustic techniques generally fall outside the primary scope of traditional visual cinematography analysis (e.g., framing, camera movement, and lighting). Furthermore, large-scale, high-quality, and open-source datasets dedicated to expert-level cinematic audio transitions remain scarce. Future work could extend our suite by incorporating emerging omni-modal foundation models to jointly analyze soundscapes and visual shot language.

\section{Ethics Statement}
SLU-SUITE is curated from 11 existing public and academic datasets. We do not collect new data from human subjects; instead, we follow the original licenses and terms of use of the source datasets, which were already anonymized and released for research purposes.

\end{document}